\newif\ifEBTEIEEEBuild
\ifdefined\EBTETOPS
  \documentclass[acmsmall,review,screen]{acmart}
  \EBTEIEEEBuildfalse
  \renewcommand\footnotetextcopyrightpermission[1]{}
  \acmJournal{TOPS}
  \acmYear{2026}
  \acmMonth{7}
\else
  \documentclass[conference,onecolumn]{IEEEtran}
  \EBTEIEEEBuildtrue
  \IEEEoverridecommandlockouts
\fi

\ifEBTEIEEEBuild
  \usepackage{cite}
  \usepackage[margin=1in]{geometry}
\fi
\usepackage{amsmath,amssymb,amsfonts,amsthm}
\usepackage{booktabs}
\usepackage{array}
\usepackage{enumitem}
\usepackage{graphicx}
\usepackage{xcolor}
\usepackage{listings}
\usepackage{hyperref}
\usepackage{url}
\usepackage{microtype}
\usepackage{tikz}
\usepackage{placeins}
\ifEBTEIEEEBuild
  \usepackage{fancyhdr}
\fi
\usepackage{longtable}
\ifEBTEIEEEBuild\else
\fi
\usetikzlibrary{arrows.meta,positioning,fit,calc}

\hypersetup{
  hidelinks,
  pdftitle={Explanation-Bound Tool Execution for AI Agents: Server-Verified Action Claims Without Trusting Model Rationales},
  pdfauthor={Genliang Zhu and Chu Wang},
  pdfsubject={A claim-carrying mediation mechanism for governed AI-agent tool execution},
  pdfkeywords={AI agents, tool execution, action claims, auditability, access control, prompt injection, least privilege}
}

\setlist[itemize]{leftmargin=1.2em}
\setlist[enumerate]{leftmargin=1.4em}
\newcommand{\ebte}{Explanation-Bound Tool Execution}

\newcommand{\facts}{\mathcal{F}}
\newcommand{\decision}{\mathsf{Decision}}

\newcommand{\allow}{\ensuremath{\mathsf{Allow}}}
\newcommand{\review}{\ensuremath{\mathsf{Review}}}
\newcommand{\deny}{\ensuremath{\mathsf{Deny}}}

\newcommand{\mcode}[1]{\text{\texttt{#1}}}
\newcolumntype{L}[1]{>{\raggedright\arraybackslash}p{#1}}

\newcommand{\TaskFamilies}{8}
\newcommand{\VariantsPerFamily}{17}
\newcommand{\ConformanceTasks}{136}
\newcommand{\ConformanceRows}{544}
\newcommand{\HardFixtures}{96}
\newcommand{\SoftFixtures}{32}
\newcommand{\AlignedFixtures}{8}
\newcommand{\MetamorphicChecks}{232}
\newcommand{\StructuralPacketChecks}{136}
\newcommand{\AuthoritativePolicyDenyChecks}{8}
\newcommand{\AuthoritativePolicyMalformedDenyChecks}{8}
\newcommand{\AuthoritativePolicyReviewChecks}{8}
\newcommand{\SchemaSoftHardDominanceChecks}{8}
\newcommand{\RawSchemaHardDominanceChecks}{8}
\newcommand{\HardSoftPacketTruthChecks}{8}
\newcommand{\StaleIncompleteEvidenceChecks}{8}
\newcommand{\MixedMalformedAlignedSetChecks}{8}
\newcommand{\MixedMalformedKnownHardChecks}{8}
\newcommand{\SelfDenialMalformedChecks}{8}
\newcommand{\UnavailableAuthoritativeEvidenceChecks}{8}
\newcommand{\TypedRawDisclosureChecks}{8}
\newcommand{\MalformedExpectedEffectChecks}{8}
\newcommand{\EffectUpperBoundChecks}{4}
\newcommand{\EffectSafeSlackChecks}{4}
\newcommand{\InapplicableEffectClaimChecks}{8}
\newcommand{\ContextExactnessHardChecks}{8}
\newcommand{\UnavailableAuthoritativeContextChecks}{8}
\newcommand{\DuplicateCurrentContextChecks}{3}
\newcommand{\AuthoritativeFactAvailabilityChecks}{8}
\newcommand{\OrthogonalAuthoritativeMalformedHardChecks}{8}
\newcommand{\EmptyAuthoritativeObjectChecks}{8}
\newcommand{\MissingPayloadEnvelopeSiblingChecks}{8}
\newcommand{\MissingIntentEnvelopeSiblingChecks}{5}
\newcommand{\PresentOptionalOrthogonalHardChecks}{8}
\newcommand{\ResourceOperandAvailabilityChecks}{4}
\newcommand{\InvalidAuthoritativeNumericDomainChecks}{8}
\newcommand{\SelfDenialUnavailableAuthorityChecks}{8}
\newcommand{\PolicyBasisOperandAvailabilityChecks}{8}
\newcommand{\CanonicalScopeClaimChecks}{3}
\newcommand{\EvidenceExtraKeyStaleChecks}{8}
\newcommand{\WideAllowedFieldReplayChecks}{1}
\newcommand{\OverCapAuthoritativeSetChecks}{8}
\newcommand{\TailBoundedClaimChecks}{8}
\newcommand{\TailAllowedFieldClaimChecks}{1}
\newcommand{\TailContextDependencyChecks}{3}
\newcommand{\CanonicalInvalidScalarChecks}{8}
\newcommand{\InvalidAuthoritativeContextSourceChecks}{3}
\newcommand{\ClaimAliasIsolationChecks}{4}
\newcommand{\AuthoritativeEnumAvailabilityChecks}{4}
\newcommand{\PacketDiagnosticCollisionChecks}{2}
\newcommand{\PacketOutcomeOrderChecks}{2}
\newcommand{\ScopeFreePolicyBasisChecks}{8}
\newcommand{\EmptyScopeMissingChecks}{8}
\newcommand{\UnicodeLengthChecks}{2}
\newcommand{\BoundedValidationResourceChecks}{2}
\newcommand{\EffectClaimOmissionChecks}{4}
\newcommand{\DestinationClaimOmissionChecks}{1}
\newcommand{\FieldClaimOmissionChecks}{1}
\newcommand{\JoinCompositionChecks}{27}
\newcommand{\RuntimeFamilies}{4}
\newcommand{\RuntimeTasks}{68}
\newcommand{\RuntimeRows}{272}
\newcommand{\RuntimeHardFixtures}{48}
\newcommand{\RuntimeSoftFixtures}{16}
\newcommand{\RuntimeAlignedFixtures}{4}
\newcommand{\HostedModels}{4}
\newcommand{\HostedTasks}{12}
\newcommand{\HostedAlignedTasks}{4}
\newcommand{\HostedRepairableTasks}{4}
\newcommand{\HostedNonrepairableTasks}{4}
\newcommand{\HostedRepeats}{2}
\newcommand{\HostedInitialPerModel}{24}
\newcommand{\HostedRepairPerModel}{24}
\newcommand{\HostedSeededPerModel}{8}
\newcommand{\HostedRows}{224}
\newcommand{\HostedInitialRows}{96}
\newcommand{\HostedRepairRows}{96}
\newcommand{\HostedSeededRows}{32}
\newcommand{\HostedProviderErrors}{21}
\newcommand{\HostedInitialAgreementAll}{71/96}
\newcommand{\HostedRepairAgreementAll}{66/96}
\newcommand{\HostedSeededAgreementAll}{19/32}
\newcommand{\HostedInitialAgreementResponse}{71/88}
\newcommand{\HostedRepairAgreementResponse}{66/84}
\newcommand{\HostedSeededAgreementResponse}{19/31}
\newcommand{\HostedCorrectToCorrect}{63}
\newcommand{\HostedCorrectToIncorrect}{8}
\newcommand{\HostedIncorrectToCorrect}{3}
\newcommand{\HostedIncorrectToIncorrect}{22}
\newcommand{\HostedNonrepairableRows}{32}
\newcommand{\HostedNonrepairableDeny}{17}
\newcommand{\HostedNonrepairableReview}{15}
\newcommand{\HostedExpectedDenyRows}{64}
\newcommand{\HostedCurrentAttempts}{224}
\newcommand{\HostedCurrentResponses}{203}
\newcommand{\HostedCurrentProviderFailures}{21}
\newcommand{\HostedCurrentSavedClaims}{160}
\newcommand{\HostedCurrentParseFailures}{43}
\newcommand{\HostedHistoricalConformantRows}{149}
\newcommand{\HostedCurrentInitialAgreementAll}{70/96}
\newcommand{\HostedCurrentRepairAgreementAll}{65/96}
\newcommand{\HostedCurrentSeededAgreementAll}{17/32}
\newcommand{\HostedCurrentInitialAgreementResponse}{70/88}
\newcommand{\HostedCurrentRepairAgreementResponse}{65/84}
\newcommand{\HostedCurrentSeededAgreementResponse}{17/31}
\newcommand{\HostedCurrentAllows}{56}
\newcommand{\HostedCurrentReviews}{75}
\newcommand{\HostedCurrentDenies}{93}
\newcommand{\HostedCurrentDecisionChanges}{56}
\newcommand{\HostedCurrentAllowToReview}{5}
\newcommand{\HostedCurrentAllowToDeny}{4}
\newcommand{\HostedCurrentReviewToDeny}{47}
\newcommand{\HostedCurrentDenyDowngrades}{0}
\newcommand{\HostedCurrentNonrepairableDeny}{26}
\newcommand{\HostedCurrentNonrepairableReview}{6}
\newcommand{\HostedCurrentNonrepairableAllow}{0}
\newcommand{\HostedCurrentExpectedDenyAllow}{0}
\newcommand{\HostedCurrentProviderCalls}{0}
\newcommand{\HostedCurrentFingerprintDisplay}{\texttt{sha256:8fdb3c704c964a2c\allowbreak 9474c190b4cdafca\allowbreak 8c08890bce6f4f37\allowbreak 36ca1aabd109abb1}}
\newcommand{\AlignedAllows}{5}
\newcommand{\AlignedReviews}{3}
\newcommand{\ExternalProposals}{23}
\newcommand{\ExternalRows}{92}
\newcommand{\ExternalAttacks}{12}
\newcommand{\ExternalBenign}{8}
\newcommand{\ExternalDirect}{3}
\newcommand{\ExternalConfigurations}{4}
\newcommand{\ExternalOrdinaryTasks}{4}

\newcommand{\FreePDA}{0.1471}
\newcommand{\FreeSNAC}{0.3750}
\newcommand{\FreeHFAR}{0.6250}
\newcommand{\FreeSFAR}{0.6250}
\newcommand{\FreeAFA}{1.0000}
\newcommand{\FreeRRF}{0.3750}
\newcommand{\FreeSEC}{0.0000}
\newcommand{\FreeEFLA}{1.0000}
\newcommand{\SchemaPDA}{0.3162}
\newcommand{\SchemaSNAC}{0.5313}
\newcommand{\SchemaHFAR}{0.5729}
\newcommand{\SchemaSFAR}{0.1563}
\newcommand{\SchemaAFA}{1.0000}
\newcommand{\SchemaRRF}{0.4632}
\newcommand{\SchemaSEC}{0.0000}
\newcommand{\SchemaEFLA}{0.0000}
\newcommand{\PayloadPDA}{0.5515}
\newcommand{\PayloadSNAC}{0.6875}
\newcommand{\PayloadHFAR}{0.3646}
\newcommand{\PayloadSFAR}{0.1563}
\newcommand{\PayloadAFA}{1.0000}
\newcommand{\PayloadRRF}{0.3750}
\newcommand{\PayloadSEC}{0.0000}
\newcommand{\PayloadEFLA}{0.0000}
\newcommand{\FullPDA}{1.0000}
\newcommand{\FullSNAC}{1.0000}
\newcommand{\FullHFAR}{0.0000}
\newcommand{\FullSFAR}{0.0000}
\newcommand{\FullAFA}{1.0000}
\newcommand{\FullRRF}{0.2574}
\newcommand{\FullSEC}{0.6471}
\newcommand{\FullEFLA}{0.0000}
\newcommand{\RuntimeFreeHCFD}{1.0000}
\newcommand{\RuntimeFreeSoftDraft}{1.0000}
\newcommand{\RuntimeFreeAlignedDraft}{1.0000}
\newcommand{\RuntimeFreeAudit}{0.9412}
\newcommand{\RuntimeFreeSEC}{0.0000}
\newcommand{\RuntimeFreeExec}{0.0000}
\newcommand{\RuntimeFreeVerifyP}{0.0027}
\newcommand{\RuntimeFreeTotalP}{2.5756}
\newcommand{\RuntimeSchemaHCFD}{0.9167}
\newcommand{\RuntimeSchemaSoftDraft}{1.0000}
\newcommand{\RuntimeSchemaAlignedDraft}{1.0000}
\newcommand{\RuntimeSchemaAudit}{0.9375}
\newcommand{\RuntimeSchemaSEC}{0.0000}
\newcommand{\RuntimeSchemaExec}{0.0000}
\newcommand{\RuntimeSchemaVerifyP}{0.1231}
\newcommand{\RuntimeSchemaTotalP}{1.8426}
\newcommand{\RuntimePayloadHCFD}{0.5833}
\newcommand{\RuntimePayloadSoftDraft}{1.0000}
\newcommand{\RuntimePayloadAlignedDraft}{1.0000}
\newcommand{\RuntimePayloadAudit}{0.9167}
\newcommand{\RuntimePayloadSEC}{0.0000}
\newcommand{\RuntimePayloadExec}{0.0000}
\newcommand{\RuntimePayloadVerifyP}{0.0943}
\newcommand{\RuntimePayloadTotalP}{1.4660}
\newcommand{\RuntimeFullHCFD}{0.0000}
\newcommand{\RuntimeFullSoftDraft}{1.0000}
\newcommand{\RuntimeFullAlignedDraft}{1.0000}
\newcommand{\RuntimeFullAudit}{1.0000}
\newcommand{\RuntimeFullSEC}{0.6471}
\newcommand{\RuntimeFullExec}{0.0000}
\newcommand{\RuntimeFullVerifyP}{0.3289}
\newcommand{\RuntimeFullTotalP}{1.4453}
\newcommand{\RuntimeSourceFiles}{86}

\newcommand{\RuntimeSourceDigestDisplay}{\texttt{sha256:9ce8bf05053bfe74\allowbreak 4b8eca2d6ae704d0\allowbreak dcb664ec17cfa277\allowbreak dae6ce5380aad2fc}}
\newcommand{\AgentDojoRevisionDisplay}{\texttt{a75aba7631d3ca5f\allowbreak b7ab938965c97ead\allowbreak 2f9ff84b}}
\newcommand{\ModelGPTTableRow}{GPT-OSS 120B & 24/24 & 24/24 & 24/24 & 23/24 & 8/8 & 0/56 & 0/8 & 0/56 \\}
\newcommand{\ModelQwenTableRow}{Qwen3 Next 80B & 24/24 & 24/24 & 24/24 & 24/24 & 2/8 & 0/56 & 0/8 & 0/56 \\}
\newcommand{\ModelNemotronTableRow}{Nemotron 3 Nano 30B & 7/24 & 2/24 & 7/24 & 7/24 & 2/8 & 0/56 & 0/8 & 0/56 \\}
\newcommand{\ModelDeepSeekTableRow}{DeepSeek V4 Flash & 16/24 & 16/24 & 16/24 (16/16) & 12/24 (12/12) & 7/8 (7/7) & 21/56 & 0/8 & 0/56 \\}
\newcommand{\AgentAblationTableRow}[1]{#1 & 0.4783 & 0.0000 & 1.0000 & 1.0000 & 1.0000 & 0.0000 \\}
\newcommand{\AgentFullTableRow}{Full EBTE & 1.0000 & 1.0000 & 1.0000 & 1.0000 & 1.0000 & 1.0000 \\}

\definecolor{opBlue}{HTML}{3B82F6}
\definecolor{opBlueFill}{HTML}{EFF6FF}
\definecolor{opBlueText}{HTML}{1E40AF}
\definecolor{opGreen}{HTML}{22C55E}
\definecolor{opGreenFill}{HTML}{F0FDF4}
\definecolor{opGreenText}{HTML}{15803D}
\definecolor{opAmber}{HTML}{FCD34D}
\definecolor{opAmberFill}{HTML}{FEF3C7}
\definecolor{opAmberText}{HTML}{92400E}
\definecolor{opRed}{HTML}{EF4444}
\definecolor{opRedFill}{HTML}{FEF2F2}
\definecolor{opRedText}{HTML}{DC2626}
\definecolor{opPurple}{HTML}{8B5CF6}
\definecolor{opPurpleFill}{HTML}{F5F3FF}
\definecolor{opPurpleText}{HTML}{7C3AED}

\newtheorem{proposition}{Proposition}

\newtheorem{definition}{Definition}

\lstdefinelanguage{json}{
  morestring=[b]",
  showstringspaces=false,
  morecomment=[l]{//},
  morecomment=[s]{/*}{*/},
  keywords={true,false,null},
  sensitive=false
}

\ifEBTEIEEEBuild
  \title{Explanation-Bound Tool Execution for AI Agents:\\Server-Verified Action Claims Without Trusting Model Rationales}
\else
  \title{Explanation-Bound Tool Execution for AI Agents: Server-Verified Action Claims Without Trusting Model Rationales}
\fi
\ifEBTEIEEEBuild
  \author{%
  \parbox[t]{0.48\linewidth}{\centering
    Genliang Zhu\\
    Accentrust\\
    Georgia Institute of Technology
  }
  \hfill
  \parbox[t]{0.48\linewidth}{\centering
    Chu Wang\\
    Accentrust\\
    University of Illinois Urbana-Champaign
  }
  \thanks{Research stewardship: Accentrust. Correspondence: \href{mailto:research@accentrust.com}{research@accentrust.com}.}%
  }
  \date{}
\else
  \author{Genliang Zhu}
  \affiliation{\institution{Accentrust}\country{Canada}}
  \affiliation{
    \institution{Georgia Institute of Technology}
    \city{Atlanta}
    \state{Georgia}
    \country{United States}
  }
  \email{research@accentrust.com}
  \author{Chu Wang}
  \affiliation{\institution{Accentrust}\country{Canada}}
  \affiliation{
    \institution{University of Illinois Urbana-Champaign}
    \city{Urbana-Champaign}
    \state{Illinois}
    \country{United States}
  }
  
\fi

\begin{document}
\ifEBTEIEEEBuild
  \maketitle
  \pagestyle{fancy}
  \fancyhf{}
  \renewcommand{\headrulewidth}{0pt}
  \renewcommand{\footrulewidth}{0pt}
  \fancyfoot[R]{\thepage}
  \thispagestyle{fancy}
\fi

\begin{abstract}
Tool-using agents expose structured calls but commonly attach free-form rationales. Such rationales are neither authorization nor reliable introspection. We present \emph{Explanation-Bound Tool Execution} (EBTE), a claim-carrying mediation layer that converts decision-relevant rationale content into typed action claims and checks them against server-held intent, policy, payload, tool, risk, provenance, and freshness facts. EBTE cannot widen baseline authority: conflicts deny, incomplete or uncertain claims review, and only matching claims remain eligible for governed execution. We formalize this composition under explicit mediation and trusted-fact assumptions and implement a versioned reference profile with minimized audit packets. Across \ConformanceTasks{} authored conformance scenarios, the full profile matches all specified dispositions, admits none of \HardFixtures{} designated hard contradictions, and passes \MetamorphicChecks{} metamorphic checks. A draft-only reference integration forwards none of \RuntimeHardFixtures{} authored hard cases under EBTE while preserving all \RuntimeSoftFixtures{} soft-review and \RuntimeAlignedFixtures{} aligned draft paths. In a frozen 2026-07-12 exploratory \HostedRows-attempt hosted-model record, the historical generation/runner agreement counts are \HostedInitialAgreementAll{}, \HostedRepairAgreementAll{}, and \HostedSeededAgreementAll{}; a zero-call revalidation of the preserved minimized claims under the current pipeline yields \HostedCurrentInitialAgreementAll{}, \HostedCurrentRepairAgreementAll{}, and \HostedCurrentSeededAgreementAll{}. In an AgentDojo-derived semantic check, existing high-risk controls make all \ExternalAttacks{} attack proposals non-allow, while EBTE resolves the task--proposal contradictions as deny. Together, these studies establish profile conformance and demonstrate the feasibility of server-checked action claims within the evaluated settings.
\end{abstract}

\ifEBTEIEEEBuild
  \noindent\textbf{Keywords:} AI agents, tool execution, action claims, auditability, access control, prompt injection, least privilege, OpenPort.
\else
  \ccsdesc[500]{Security and privacy~Access control}
  \ccsdesc[300]{Computing methodologies~Artificial intelligence}
  \keywords{AI agents, tool execution, action claims, auditability, access control, prompt injection, least privilege}
  \maketitle
\fi

\section{Introduction}

AI agents increasingly act through application tools. They query private data, create records, export collections, update shared state, and invoke administrative operations. The action interface is usually structured: a tool name and payload are validated against a schema. The explanation interface is not. Some systems retain free-form model reasoning, some attach a short justification string, and others provide no decision basis beyond the selected tool itself.

This asymmetry creates a governance gap. A payload can be syntactically valid and statically authorized while remaining inconsistent with the user's request. A justification can sound reasonable while referring to a different resource, omitting an external destination, understating risk, or hiding that an untrusted document influenced the decision. Language-model explanations can also be unfaithful to the factors that caused a prediction~\cite{turpin2023unfaithful,lanham2023faithfulness,yee2024dissociation}. Consequently, preserving a rationale is not equivalent to preserving trustworthy evidence.

The problem is not solved by requiring chain-of-thought. ReAct demonstrates the utility of interleaving reasoning and actions~\cite{yao2023react}, but operational governance needs a smaller and more stable object than an unrestricted reasoning trace. Raw traces can contain private inputs, untrusted instructions, proprietary prompts, or irrelevant internal deliberation. They are also difficult to validate. A security gateway should instead ask a bounded question: \emph{what decision-relevant claims must be true for this proposed tool effect to be justified, and can the server verify them independently?}

We propose \ebte{} (EBTE), which requires a structured explanation object, \mbox{\texttt{ToolInvocationExplanation}}. The object contains a declared intent class, selected tool, policy basis, expected effect, risk tier, uncertainty, untrusted-context dependencies, and evidence references. EBTE treats this object as an untrusted claim set and derives no authority from it. A server-side verifier compares enforceable claims with independently held facts: a current intent certificate, canonical tool metadata, the proposed payload, a policy snapshot, an authorized route record, and a context-risk snapshot. The reference profile carries digest references for intent, policy, and payload. Route eligibility remains a server fact, and each context dependency carries a source-category digest.

The verifier produces one of three outcomes. \emph{Allow} means the explanation is consistent with independently authorized low- or medium-risk facts. \emph{Review} means the proposal is potentially legitimate but contains uncertainty, incomplete policy evidence, risk-sensitive effects, or another condition requiring operator judgment. \emph{Deny} means the explanation contradicts a hard execution fact, omits a material untrusted dependency, or attempts to place raw sensitive context in durable evidence. These outcomes compose with, rather than replace, existing authorization and effect controls.

Our contributions are:

\begin{itemize}
  \item \textbf{Action-claim abstraction.} We separate model rationale from a typed, claim-carrying mediation object whose enforceable fields are checked against authoritative application state and whose advisory fields can only tighten routing.
  \item \textbf{Explicit-assumption composition.} We define a three-outcome decision lattice in which baseline authorization, EBTE, and effect controls combine by their most restrictive outcome; review is non-executing, and any later approval must resolve current facts and recompute the composed decision.
  \item \textbf{Executable profile and evidence packet.} We provide a bounded schema, stable reason taxonomy, strict reference verifier, minimized structural audit projection, conformance profile, and cross-artifact integrity verifier.
  \item \textbf{Tiered mechanism evidence.} We evaluate authored profile conformance and predicate ablation, draft-only runtime integration, hosted-model generation, and transfer to an external benchmark structure across \ConformanceRows{}, \RuntimeRows{}, \HostedRows{}, and \ExternalRows{} rows, respectively.
\end{itemize}

\section{Background and Motivation}

\subsection{Explanation Is Not Authorization}

Explanations in machine learning are commonly used to help people interpret model outputs~\cite{ribeiro2016lime,miller2019explanation}. Interpretability is not a single property, and practical users can misunderstand or overgeneralize what tools reveal~\cite{lipton2018mythos,kaur2020interpreting,poursabzi2018manipulating}. In an agent gateway, the immediate question is not whether a human finds an explanation persuasive. It is whether the proposed external effect remains inside independently authorized bounds. Classical least privilege requires minimizing authority~\cite{saltzer1975}; attribute-based access control evaluates trusted attributes and policy rather than prose~\cite{nist2014abac}. EBTE applies the same discipline to explanations: prose may summarize a decision, but enforceable fields must be checked against authoritative state.

This distinction is especially important because current models can produce plausible post hoc rationales. Work on chain-of-thought faithfulness shows that stated reasoning may omit influential biasing features, vary under interventions, or diverge from the process that produced an answer~\cite{turpin2023unfaithful,lanham2023faithfulness,yee2024dissociation}. EBTE instead verifies an externally testable contract about the action being requested.

\subsection{Tool-Using Agent Threats}

Indirect prompt injection and sandbox studies show that untrusted content can redirect integrated applications and tool-using agents~\cite{greshake2023indirect,ruan2024toolemu}. AgentDojo and InjecAgent systematize harmful and exfiltrating action tasks~\cite{debenedetti2024agentdojo,zhan2024injecagent}. ToolHijacker and MCPTox demonstrate that tool descriptions and catalogs are themselves attack surfaces~\cite{shi2025toolhijacker,wang2025mcptox}. StruQ separates instructions from data at the model input boundary~\cite{chen2024struq}; AttriGuard studies causal attribution for tool invocations~\cite{he2026attriguard}. EBTE is complementary. It assumes that model-level defenses and attribution can fail, then checks whether the explanation's effect claims agree with server-held facts.

MCP identifies tools as model-controlled primitives and specifies transport authorization semantics~\cite{mcp2025tools,mcp2025authorization}. Token validation is necessary, but a valid token does not establish that a particular action is supported by the current user request. NIST's Generative AI Profile emphasizes documentation, testing, human oversight, and monitoring~\cite{nist2024genai}; OWASP recommends least privilege and approval for high-impact actions~\cite{owasp2025promptinjection}. EBTE operationalizes a narrow part of that guidance at the tool-effect boundary.

\subsection{Auditability and Minimal Disclosure}

Recent work distinguishes accountability, auditability, and the act of auditing agent behavior~\cite{nian2026auditable}. Static analysis can identify risky agent code and configuration~\cite{zhang2026agentaudit}, but post-deployment reconstruction also requires runtime decision evidence. A complete raw transcript is neither necessary nor always safe. Evidence digests should use a deterministic serialization such as JCS when JSON objects are hashed~\cite{rundgren2020jcs}. EBTE records normalized categories, reason codes, bounded identifiers, and cryptographic digests. The structural evaluation covers packet completeness and rejection of an explicit raw-content field and fixed synthetic marker; semantic content controls and operator comprehension are outside its scope.

\section{Problem Definition and Threat Model}

\subsection{System Model}

An agent receives user request $u$, observes context $X$, selects tool $t$, constructs payload $x$, and emits explanation $e$. The application gateway holds independently derived facts:

\[
\facts = (C, T, P, R, X_s, Q, H),
\]

where $C$ is an intent certificate, $T$ is trusted canonical tool metadata, $P$ is the current application policy, $R$ is a route record or authorized candidate set, $X_s$ is a current context-risk snapshot, $Q$ is preflight or state-witness evidence, and $H$ is integrity-protected audit state. Model-visible tool descriptions may be attacker-controlled; $T$ denotes a separately governed registry entry. Not every deployment must implement every component, but a missing fact cannot be silently replaced by a model assertion. The public profile evaluated here directly resolves $C,T,P,R,$ and $X_s$; execution-time state witness $Q$ and durable audit integrity $H$ are composition requirements rather than claims established by the draft-only prototype.

The gateway first evaluates ordinary authorization and effect controls. EBTE then verifies explanation claims against $\facts$ and selects an execution mode. For high-risk actions, a consistent explanation can still require preflight, confirmation, or draft review.

\subsection{Adversary Capabilities}

We consider an adversary that can:

\begin{itemize}
  \item influence model-visible documents, messages, tool outputs, or tool descriptions;
  \item cause the planner to select an incorrect tool or construct an overbroad payload;
  \item induce a plausible but incomplete, contradictory, or risk-understated explanation;
  \item omit or mislabel untrusted context dependencies;
  \item replay stale explanation evidence after intent, policy, route, or context changes; and
  \item attempt to place private or adversarial raw content into durable audit records, flood the review queue, or use detailed repair feedback as an adaptive oracle.
\end{itemize}

We do not assume the model, planner, retriever, explanation generator, model-visible tool catalog, or free-text summaries are trusted. We assume that principal and tenant binding, gateway code, the canonical policy and tool stores, digest implementation, clocks or version sources, and scoped current-state resolution are correct. Evidence lookup must be bound to the authenticated application, actor, and session. A compromised fact source, confused-deputy identity binding, malicious code inside an otherwise authorized tool, out-of-band side effects, colluding administrators, and a fully compromised enforcement process remain outside the guarantee envelope.

\paragraph{Lifecycle boundary.}
The security-relevant unit includes both initial verification and any later approval. A \review{} outcome may create a draft but cannot execute an effect. Before acting on an approved draft, the gateway must reload current intent, route, policy, context, payload, and preflight facts and recompute the composed decision. The present runtime study covers only draft creation; the later approval transition remains unevaluated. Rate limiting, coarse model-facing errors, and independently governed audit access contain review-flooding and feedback-oracle risks.

\subsection{Security and Governance Goals}

EBTE targets five goals:

\begin{enumerate}
  \item \textbf{Non-authorization:} explanation content cannot grant authority absent from the underlying policy and effect-control path.
  \item \textbf{Contradiction detection:} material mismatches between explanation and trusted facts cannot yield immediate allow.
  \item \textbf{Review routing:} uncertainty and incomplete soft evidence escalate without being mislabeled as hard attacks.
  \item \textbf{Structural replay support:} a packet identifies the profile, normalized predicate outcomes, fact digests, and reason categories needed for rule-level replay when the referenced facts remain resolvable.
  \item \textbf{Minimal ordinary disclosure:} the default packet excludes raw untrusted content; semantic secret detection and authorized source retrieval remain separate controls.
\end{enumerate}

\section{Explanation Contract}

\subsection{Contract Fields}

Table~\ref{tab:fields} defines the core explanation fields. Human-readable summaries are permitted, but enforcement depends on typed fields and evidence references.

\begin{table}[h]
\centering
\small
\begin{tabular}{L{0.22\linewidth}L{0.31\linewidth}L{0.37\linewidth}}
\toprule
\textbf{Field} & \textbf{Purpose} & \textbf{Independent comparison} \\
\midrule
Intent classes and summary & Declares the request-level purpose. & Intent certificate classes, allowed operation, and request digest; summary text is advisory. \\
Selected tool and reason code & Identifies the proposed capability. & Trusted registry and authorized route record; reason-code prose is advisory. \\
Policy basis & Lists required scopes and rule identifiers. & Current authorization and policy snapshot. \\
Expected effect & Declares operation, resources, bounds, fields, and destination. & Tool semantics, payload, preflight impact, and intent bounds. \\
Risk tier & Declares impact sensitivity. & Server-owned tool and policy risk. \\
Uncertainty & Declares ambiguity or missing evidence. & One-way escalation only; a low declaration cannot establish certainty. \\
Context dependencies & Declares categorical provenance and digests. & Current context-risk source and digest; claimed influence is advisory. \\
Evidence references & Binds intent, policy, and payload evidence. & Exactly three current digest-bearing records in profile v0.1. \\
\bottomrule
\end{tabular}
\caption{Core \texttt{ToolInvocationExplanation} fields. No field is self-authorizing.}
\label{tab:fields}
\end{table}
\FloatBarrier

\begin{table}[h]
\centering
\scriptsize
\resizebox{\linewidth}{!}{%
\begin{tabular}{L{0.18\linewidth}L{0.24\linewidth}L{0.34\linewidth}L{0.20\linewidth}}
\toprule
\textbf{Claim field} & \textbf{Authoritative source} & \textbf{Implemented predicate} & \textbf{Included mutation or status} \\
\midrule
Schema and version & Public JSON Schema and profile & Strict validation before semantic comparison; unknown or malformed input cannot allow. Parseable subfields retain independently known contradictions. & Invalid enumeration, mixed-malformed, and malformed-object groups. \\
Intent and operation & Current intent certificate & Declared classes are confined to certified classes; payload operation and effect are checked against the certified operation set. Unavailable authoritative subfields are unknown, not matches. & Class, operation, payload-outside-intent, and availability controls. \\
Tool and route & Canonical registry and current authorized route & Declared and proposed tool agree, and the tool belongs to the server-owned candidate set. & Tool and authorized-route mismatch. \\
Resource and bounds & Intent certificate, payload, and tool semantics & Resource type and target set agree with the payload; each applicable effect claim covers the payload and remains contained by intent. Substantive claims on inapplicable dimensions deny. & Resource, overbreadth, field, amount, record, destination, upper-bound, safe-slack, and inapplicability controls. \\
Policy basis & Current policy snapshot & Required scopes, rule identifiers, and policy decision match current server facts. & Missing or fabricated rule, scope, and decision mutations. \\
Risk & Canonical tool and policy risk & Understatement escalates to review; declared risk never lowers server risk. & Risk-understatement mutation. \\
Uncertainty & Model declaration and server review policy & High uncertainty can only tighten to review; low uncertainty is not independently certified. & One-way advisory field. \\
Context & Current context source/category and digest & The normalized dependency-pair set is empty for trusted context and exactly the current pair for untrusted context; an unavailable authoritative state is unknown. \texttt{instructionInfluence} is advisory. & Omission, extra-pair, duplicate-pair, and current-state availability controls. \\
Evidence references & Current intent, policy, and payload records & Exactly one current digest for each profile-v0.1 type. Any independently known mismatch denies even when another type is missing or duplicated; unavailable current state is unknown. & Stale, missing, duplicate, compound, and authoritative-availability controls. \\
Summary and tool reason & None; operator-facing only & Bounded and non-authorizing; never used to widen a decision. & Schema and disclosure checks only. \\
\bottomrule
\end{tabular}}
\caption{Field-to-authority contract for the reference profile.}
\label{tab:fieldauthority}
\end{table}
\FloatBarrier

\subsection{Chain-of-Thought Boundary}

The contract is not chain-of-thought. EBTE neither requests nor requires hidden
reasoning traces. The \texttt{intentSummary}, \texttt{toolReasonCode},
\texttt{instructionInfluence}, and low-uncertainty declarations are
non-authorizing. Profile v0.1 routes high uncertainty to review and records the
other advisory values without using them to relax or tighten its disposition;
a separately versioned deployment policy may use them only for stricter
handling. Enforceable comparisons use server-checkable values such as
operation, route membership, resource identifier, record bound, risk tier,
destination, policy rule, and evidence digest. This boundary reduces privacy
exposure and avoids treating fluent prose or self-reported cognition as a
security proof.

\subsection{Minimal-Disclosure Evidence}

Context dependencies contain a source category and digest plus an advisory influence flag. The ordinary evidence projection omits raw document text, messages, customer records, proprietary prompts, and model traces. The current artifact rejects an explicit \texttt{rawContent} field and scans for a fixed synthetic marker; bounded free-text summaries remain a possible encoded-disclosure channel. Production use therefore requires semantic content controls, retention policy, and separately authorized source access beyond the structural checks evaluated here.

\section{Formal Model}

\begin{definition}[Explanation projection]
Let $\phi_e(e)$ map explanation $e$ to normalized claims over dimensions
\[
D = \{\text{schema},\text{intent},\text{tool},\text{route},\text{resource},
\text{effect},\text{policy},\text{risk},\text{uncertainty},\text{context},
\text{evidence},\text{privacy}\}.
\]
Let $\phi_s(t,x,\facts)$ map the proposed action and server state to independently held facts over the same dimensions.
\end{definition}

The implementation refines these dimensions into 18 stable packet predicates:
schema validity; tool identity; route membership; intent-class equality;
operation equality and intent confinement; resource identity; effect bounds;
destination; payload confinement; policy-basis scope/rule and disposition; risk;
uncertainty; context digest; evidence completeness and freshness; and raw
context disclosure. Route is therefore explicit rather than silently folded
into tool identity, while evidence completeness and freshness refine the
evidence dimension.

Authoritative availability is checked per field or predicate dependency rather
than with one whole-object gate. A missing or malformed server operand marks
only its dependent predicate or predicates unknown; a contradiction on an
independent available operand is still hard. Resource type and resource ID
availability, policy scope and rule availability, and each required evidence
digest are represented separately. Registered authoritative intent classes,
operations, and context source types must also belong to the public profile
taxonomy; an otherwise bounded but unregistered server value is unavailable,
not a mismatch.

For an optional authoritative effect fact, the verifier uses three states. A
present valid value is known independently of malformed envelope siblings; a
present invalid value is unknown; and an absent value is known absent only
when the corresponding operation/resource envelope is complete. Thus
malformed siblings neither fabricate inapplicability nor mask a contradiction
on a present valid bound. Record bounds are integers in $[0,10^6]$ and amount
bounds are finite numbers in $[0,10^{12}]$, matching the public profile.

Exact-set comparisons slice at the registered field cap before filtering and
deduplicating parseable bounded strings. The packet serializes exactly the
same canonical operands. A duplicate matching value, invalid extra, or
beyond-cap tail therefore cannot create a packet-invisible semantic result,
while a parseable in-cap outside or missing value cannot be hidden by a
malformed sibling. The claim-effect normalizer accepts only public claim field
names; payload-only names such as \texttt{resourceId}, \texttt{limit},
\texttt{amount}, and \texttt{fields} are never aliases for claim fields.
Evidence freshness is likewise independent of one-to-one completeness: any
format-valid required reference that differs from its available current digest
is hard, even if another required type is missing or duplicated. Authoritative
evidence is projected in the fixed required-type order; extra keys make overall
evidence completeness unavailable but cannot evict a known current digest from
the packet.

For dimension $d$, define a comparison function

\[
v_d(e,t,x,\facts) \in \{\mcode{match},\mcode{soft},\mcode{hard},\mcode{unknown}\}.
\]

For an applicable numeric effect dimension $j$, the reference profile accepts
the declared bound only when
\[
b_j(x)\leq b_j(e)\leq b_j(C).
\]
For field sets it requires
\[
F(x)\subseteq F(e)\subseteq F(C).
\]
A substantive declaration on a payload-inapplicable numeric, field, or
destination dimension is hard. Context pairs are compared after set
normalization:
\[
\mathsf{Deps}(e)=
\begin{cases}
\varnothing,&\neg\mathsf{Untrusted}(X_s),\\
\{(\mathsf{source}(X_s),\mathsf{digest}(X_s))\},
  &\mathsf{Untrusted}(X_s).
\end{cases}
\]
Unavailable authoritative operands yield unknown rather than match. For
required evidence type $k$, any format-valid claimed digest
$h\in H_e(k)$ with available current $h_s(k)$ and $h\neq h_s(k)$ yields hard
freshness independently of whether $|H_e(k)|=1$ for every required type.

A \emph{hard} result denotes contradiction with an execution-critical fact, including tool identity, operation, target resource, effect bound, external destination, material context omission, or raw-context disclosure. A \emph{soft} result denotes uncertainty, an authoritative review requirement, incomplete policy evidence, risk understatement, or another condition that requires review but does not by itself prove malicious intent. Unknown evidence cannot be interpreted as a match.

Let the decision domain be the ordered set
\[
\mathcal{D}_3=(\{\allow,\review,\deny\},\preceq),
\qquad
\allow \prec \review \prec \deny .
\]
Map predicate outcomes and the server-owned tool-risk requirement into this
domain:
\[
s(\mcode{match})=\allow,\quad
s(\mcode{soft})=s(\mcode{unknown})=\review,\quad
s(\mcode{hard})=\deny,
\]
\[
r(t)=
\begin{cases}
\review,&\mathsf{HighRisk}(t),\\
\allow,&\text{otherwise}.
\end{cases}
\]
The EBTE decision is the least upper bound
\[
\decision(e,t,x,\facts)
=
\left(\bigsqcup_{d\in D}s\!\left(v_d(e,t,x,\facts)\right)\right)
\sqcup r(t).
\]
Thus any hard predicate dominates; absent a hard predicate, soft, unknown, or
high-risk evidence yields review; only all-match, non-high-risk claims allow.

Let $d_0(u,t,x,\facts)$ denote current baseline authorization and routing.
Let $d_E=\decision(e,t,x,\facts)$ denote the EBTE disposition, and let
$d_X(t,x,\facts)$ denote current effect control. All three belong to
$\mathcal{D}_3$. Their composition is the least upper bound
\[
d^\star = d_0 \sqcup d_E \sqcup d_X = \max_{\preceq}\{d_0,d_E,d_X\}.
\]
An effect may execute only when $d^\star=\allow$. A \review{} result creates a non-executing draft, preflight, or confirmation object. Approval of that object is a new decision event: all authoritative facts must be resolved again and $d^\star$ recomputed at action time.

The reference library exports this same three-operand most-restrictive join,
and the conformance runner enumerates all \JoinCompositionChecks{} possible
triples in $\mathcal{D}_3^3$. This validates the finite join implementation,
not complete mediation by the draft-only wrapper: that study observes $d_E$
before a forced-draft boundary and does not instantiate an effect executor.

\begin{proposition}[Design invariant: non-widening composition]
For all inputs, $d^\star \succeq d_0$ and $d^\star \succeq d_X$. EBTE therefore cannot make either baseline authorization or effect control less restrictive.
\end{proposition}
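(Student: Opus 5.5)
The argument reduces to the order-theoretic fact that a least upper bound dominates each of its operands. I first record that $\mathcal{D}_3=(\{\allow,\review,\deny\},\preceq)$ is a finite chain. Reflexivity, antisymmetry and transitivity are immediate from the stated strict order $\allow\prec\review\prec\deny$, and any two elements are comparable. In a chain, the join of finitely many elements exists and equals their maximum, which justifies the identity $d_0\sqcup d_E\sqcup d_X=\max_{\preceq}\{d_0,d_E,d_X\}$ used in the definition of $d^\star$. The join operation is associative and commutative, so the three-operand expression is well defined regardless of bracketing.

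The key step is then the upper-bound property. By definition, $d^\star$ is an upper bound of $\{d_0,d_E,d_X\}$, so $d_0\preceq d^\star$ and $d_X\preceq d^\star$ hold for every input $(u,e,t,x,\facts)$. Quantification over all inputs is harmless because the operands are total functions into $\mathcal{D}_3$. Totality needs one check. $\decision$ is itself a finite join of values $s(v_d)$ and $r(t)$, and every $v_d$ lands in $\{\mcode{match},\mcode{soft},\mcode{hard},\mcode{unknown}\}$, on which $s$ is total. So $d_E\in\mathcal{D}_3$ regardless of what the explanation $e$ contains. No assumption about $e$ is needed; this is exactly the sense in which the model-controlled explanation "derives no authority."

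Next I translate the inequalities into the stated conclusion. Execution requires $d^\star=\allow$. Since $\allow$ is the bottom element, $d^\star=\allow$ forces $d_0=d_X=\allow$. Hence EBTE can never yield execution unless baseline authorization and effect control independently allow. More generally, $d^\star$ is never strictly below $d_0$ or $d_X$, so neither control is weakened.

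I expect the main obstacle to be scoping rather than mathematics. The order-theoretic part is essentially trivial. The substantive content is the modelling claim that the deployed gateway actually computes $d^\star$ as this join from current facts, including at approval time after a \review{}. I will state that explicitly as a hypothesis: complete mediation, trusted facts, and recomputation on approval, per the threat model. I will also note that the finite implementation is corroborated by the exhaustive enumeration of the \JoinCompositionChecks{} triples in $\mathcal{D}_3^3$, while the proposition itself holds for the abstract join independently of that test.
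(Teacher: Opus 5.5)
Your proposal is correct and follows the paper's own proof. That proof likewise rests on $d^\star$ being the least upper bound of $\{d_0,d_E,d_X\}$, and hence at least every operand, plus the structural remark that no explanation field enters an override path outside the join; this remark is the paper's counterpart to your ``modelling claim'' that the gateway actually computes $d^\star$ as this join. One small calibration: complete mediation, trusted facts, and action-time recomputation are not needed for this non-widening statement, which only requires that $e$ affects the outcome solely through $d_E$ inside the join (note $d_0$ and $d_X$ do not take $e$ as an argument), whereas those extra hypotheses do real work in the hard-contradiction non-execution proposition.
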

\begin{proof}
$d^\star$ is the least upper bound under $\preceq$, so it is at least every operand. No explanation field occurs in an override path outside the join.
\end{proof}

\begin{proposition}[Design invariant: hard-contradiction non-execution]
Assume complete mediation, sound hard-comparison predicates, correct and current server facts, and action-time recomputation. If an execution-critical dimension returns $\mcode{hard}$, the associated effect does not execute.
\end{proposition}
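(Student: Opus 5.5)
The argument is a short chain: the hard predicate forces $d_E=\deny$, the join lifts this to $d^\star=\deny$, and the assumptions ensure no effect runs outside that join.

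First, fix an execution-critical dimension $d\in D$ with $v_d(e,t,x,\facts)=\mcode{hard}$ at some decision event. By the mapping $s$, $s(\mcode{hard})=\deny$. Since $\deny$ is the top element of $\mathcal{D}_3$ and $\decision$ is the least upper bound over all $s(v_{d'})$ together with $r(t)$, we get $d_E=\decision(e,t,x,\facts)=\deny$. Then, as in the proof of the non-widening proposition, $d^\star=d_0\sqcup d_E\sqcup d_X\succeq d_E=\deny$, so $d^\star=\deny\neq\allow$. By the execution rule, an effect may execute only when $d^\star=\allow$, so no execution is licensed at this event.

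The remaining work is to rule out paths around this conclusion, and each path is closed by one assumption. Complete mediation guarantees that every execution of the effect is preceded by a computation of $d^\star$ through the join, with no override path; the earlier proof already noted that no explanation field sits outside the join. Soundness of the hard predicates, together with correct and current server facts, guarantees that the $\mcode{hard}$ outcome reflects a genuine contradiction with authoritative state. It also guarantees that the verifier's own evaluation is the one that runs, so unknowns are never silently converted to matches.

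The main obstacle is the temporal case. The initial event might return $\review$ or produce a draft, and execution could then occur at a later approval event. I would handle this with action-time recomputation: any approval is a new decision event, all facts are reloaded, and $d^\star$ is recomputed. The statement's hypothesis is that the dimension is hard for the action as executed, which I would read as holding at action time with current facts. Under that reading, the same argument applied to the action-time event again gives $d^\star=\deny$.

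I would note explicitly that this is the modelling choice the proposition depends on. If hardness held only at an earlier, stale state, the proposition would make no claim; freshness predicates would then carry the burden instead. With that case closed, induction over the finite sequence of decision events preceding any execution completes the argument.
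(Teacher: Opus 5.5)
Your proof is correct and follows the paper's argument: a $\mcode{hard}$ result gives $d_E=\deny$, the join forces $d^\star=\deny$, complete mediation allows execution only at $\allow$, and action-time recomputation covers a proposal that earlier entered review. Your remarks on soundness and current facts are not needed for the implication itself, since a $\mcode{hard}$ outcome maps to $\deny$ regardless, but they do no harm.
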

\begin{proof}
A hard result yields $d_E=\deny$, hence $d^\star=\deny$. Complete mediation permits execution only at $\allow$. Action-time recomputation extends the argument to a proposal that previously entered review.
\end{proof}

\begin{proposition}[Design invariant: decision monotonicity]
Holding all other facts fixed, replacing a matched dimension with soft, unknown, or hard evidence cannot produce a less restrictive $d_E$ or $d^\star$.
\end{proposition}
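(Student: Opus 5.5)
The plan is to prove monotonicity in two layers: first for the per-dimension map $s$, then for the joins defining $d_E$ and $d^\star$. Fix a dimension $d_1\in D$ with $v_{d_1}(e,t,x,\facts)=\mcode{match}$. Let $v'_{d_1}\in\{\mcode{soft},\mcode{unknown},\mcode{hard}\}$ be the replacement value. All other $v_d$ for $d\neq d_1$ are unchanged, and so are $r(t)$, $d_0$, and $d_X$. Write $d_E'$ and $d^{\star\prime}$ for the resulting decisions.

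The first step is pointwise: $s(\mcode{match})=\allow$ is the bottom of $\mathcal{D}_3$. Hence $s(v'_{d_1})\succeq s(v_{d_1})$ for every admissible replacement; concretely the replacement yields $\review$ or $\deny$, both of which are $\succeq\allow$.

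The second step uses monotonicity of the least upper bound. In the chain $\mathcal{D}_3$, $\sqcup$ is $\max_{\preceq}$, which is monotone in each argument. So if one operand increases and the others are fixed, the join cannot decrease. Applying this to the join over $D\cup\{r(t)\}$ gives $d_E'\succeq d_E$. Applying it again to $d^\star=d_0\sqcup d_E\sqcup d_X$, with $d_0,d_X$ fixed, gives $d^{\star\prime}\succeq d^\star$. Replacing several matched dimensions simultaneously follows by induction, one dimension at a time.

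The main obstacle is not the lattice argument but justifying that "holding all other facts fixed" really isolates the changed dimension. In the implementation, predicates share operands: per-field availability, envelope completeness, and evidence completeness versus freshness. A change making one operand unknown could, in principle, alter another predicate's outcome. I would treat this as follows. The statement is phrased over the vector of predicate outcomes $(v_d)_{d\in D}$, so the proof quantifies over that vector, not over raw inputs. I would then note the paper's design commitments: availability is checked per predicate dependency, and malformed siblings "neither fabricate inapplicability nor mask a contradiction". Under those commitments, a raw-input change that degrades one predicate cannot turn another predicate into $\mcode{match}$. The result therefore lifts from the outcome vector to inputs whenever every other predicate's outcome is preserved or worsened. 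I would flag that this lifting is a property of the reference verifier checked by the metamorphic suite, not something derived from the lattice alone.
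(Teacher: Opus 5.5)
Your proof is correct and follows the same route as the paper's: the replaced dimension moves from $s(\mcode{match})=\allow$ to at least $\review$ (or $\deny$ for a hard result), and monotonicity of the join in each operand carries this through $d_E$ and then $d^\star$ with $d_0$ and $d_X$ held fixed. Your remark about shared operands is a sensible scoping caveat that the paper leaves implicit, but the argument as stated does not need it: the paper's proof, like yours, works over the vector of predicate outcomes.
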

\begin{proof}
A soft or unknown dimension removes the all-match condition and yields at least review; a hard dimension yields deny. The join operator is monotone in each operand.
\end{proof}

\begin{proposition}[Design invariant: conditional structural replay]
Let the ordinary audit projection contain the profile version, EBTE disposition $d_E$ and reason categories, schema-error categories, predicate identifiers and outcomes, bounded normalized claim and authoritative operands, and the current intent, policy, payload, and applicable context digests. If the corresponding authoritative facts remain available through authorized digest resolution, those elements suffice to replay the reference-profile rule evaluation without storing raw untrusted content in the ordinary packet.
\end{proposition}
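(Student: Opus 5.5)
The argument treats replay as recomputing a function. By the formal model, $d_E=\decision(e,t,x,\facts)$ is the least upper bound of $s(v_d)$ over the 18 packet predicates, joined with $r(t)$. So the plan is to show that each input to this join is determined by data in the ordinary audit projection together with facts resolvable from its digests. Once the predicate outcomes and $r(t)$ are recovered, the join over $\mathcal{D}_3$ is a fixed finite function: the same one exported by the reference library and exhaustively checked over the \JoinCompositionChecks{} triples. Replayed $d_E$ and reason categories then coincide with the recorded ones.

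\textbf{Step 1: each predicate is a function of packet operands.} For each of the 18 predicates I would show that $v_d$ depends only on three things:
\begin{enumerate}
\item the bounded normalized claim operands;
\item the bounded normalized authoritative operands;
\item the schema-error categories.
\end{enumerate}
The order of checks is as follows.
\begin{itemize}
\item Schema validity is recovered from the recorded error categories. Per the model, a malformed input cannot allow, but parseable subfields keep their independent contradictions; those subfields appear as normalized operands.
\item Set-valued predicates (intent classes, fields, context pairs, evidence digests) use exactly the canonical operands that the packet serializes: cap-sliced, filtered and deduplicated. The text says the packet serializes "exactly the same canonical operands", so the exact-set and containment tests $F(x)\subseteq F(e)\subseteq F(C)$ and $\mathsf{Deps}(e)$ can be re-evaluated verbatim.
\item Numeric bound predicates $b_j(x)\le b_j(e)\le b_j(C)$ and their applicability come from the recorded bounded operands. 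The three-state treatment of optional authoritative effect facts (present valid, present invalid, known absent) must be encoded in the operand representation.
\item Availability-dependent \texttt{unknown} outcomes are reproduced because availability is decided per operand, and each operand's status is recorded.
\item Evidence freshness uses the recorded current digests in fixed required-type order.
\end{itemize}

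\textbf{Step 2: the server-side comparands.} $r(t)$ and the server-side values are recovered through authorized resolution of the recorded intent, policy, payload and context digests. This is the hypothesis "the corresponding authoritative facts remain available". Resolving them gives back $\phi_s(t,x,\facts)$ as it stood at decision time. Correct digests and deterministic JCS serialization let the replayer check that resolved content is the content originally used. Profile-version pinning fixes which predicate definitions and taxonomy apply.

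\textbf{Step 3: no raw untrusted content is needed.} Context dependencies enter only as (source category, digest) pairs, and the advisory fields (\texttt{instructionInfluence}, summaries) never affect the disposition in profile v0.1. Summaries matter only via bounded schema checks. Hence no predicate reads raw context, and the packet need not store it.

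\textbf{Main obstacle.} The hard step is Step 1's claim of completeness: no semantic result may depend on information that was dropped before serialization. Examples are beyond-cap tails, invalid extras, duplicate entries, and malformed siblings that could mask or fabricate a result. I would try to discharge it with an invariant on the normalizer: normalization is applied before both evaluation and serialization, so evaluation factors through the serialized form, $v_d = \tilde v_d\circ \mathrm{norm}$. Any discarded component is then provably irrelevant to $v_d$.

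The statement is a conditional design invariant, and I expect the authors to argue it at this factorization level rather than formally. So I would support the factorization claim by appeal to the metamorphic and packet-collision checks rather than prove it for the code.
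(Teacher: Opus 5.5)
Your proposal is correct and follows essentially the same route as the paper's proof, which argues that the deterministic rule table consumes only the recorded normalized operands and digest-resolved facts, never source prose, so re-evaluating it recovers the recorded disposition; your factorization through normalization corresponds to the formal-model requirement (stated there, not in the proof) that the packet serialize the same cap-first canonical operands used in comparison. One small correction: the packet's digests cover only intent, policy, payload, and context, so the route set and canonical tool risk behind $r(t)$ must come from the recorded authoritative operands rather than from digest resolution as your Step 2 says, and the 27-triple check validates the three-operand $d_0\sqcup d_E\sqcup d_X$ join, not the internal predicate aggregation (which needs no test, being a maximum over a finite total order).
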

\begin{proof}
The deterministic predicates consume the recorded normalized operands and resolved facts rather than source prose. The projection records the operands, categorical predicate results, and digest references used by the rule table. Re-evaluating that table therefore recovers the recorded disposition, conditional on authorized resolution of the same facts. This invariant concerns profile-level rule replay, not semantic secret detection, source retention, principal or tenant binding, or whether a human can understand the action without separately authorized source access.
\end{proof}

These design invariants hold under the stated assumptions. Their scope excludes the correctness of intent classification, route membership, risk labels, policy facts, source digests, reviewer decisions, and the later action-time transition. The reference packet omits a principal--tenant--session--action envelope and an independent snapshot-version field; cross-domain replay therefore requires those values to be added and integrity-bound.

\section{EBTE Architecture}

\begin{figure}[h]
\centering
\begin{tikzpicture}[
  node distance=0.8cm and 0.7cm,
  box/.style={draw, rounded corners, align=center, minimum height=0.78cm, text width=0.18\linewidth, font=\small},
  arrow/.style={-{Latex[length=2mm]}, thick}
]
\node[box, fill=opBlueFill, draw=opBlue] (proposal) {Tool proposal\\$t,x,e$};
\node[box, fill=opPurpleFill, draw=opPurple, right=of proposal] (facts) {Server facts\\$C,T,P,R,X_s,Q$};
\node[box, fill=opAmberFill, draw=opAmber, below=of $(proposal)!0.5!(facts)$] (verify) {Explanation contract\\parser and verifier};
\node[box, fill=opGreenFill, draw=opGreen, below left=of verify] (allow) {Allow\\governed action path};
\node[box, fill=opAmberFill, draw=opAmber, below=of verify] (review) {Review\\draft / preflight / confirm};
\node[box, fill=opRedFill, draw=opRed, below right=of verify] (deny) {Deny\\stable reason code};
\node[box, fill=black!3, below=1.0cm of review, text width=0.48\linewidth] (audit) {Minimal-disclosure evidence: normalized fields, predicate outcomes, decision, reason categories, and evidence digests};
\draw[arrow] (proposal) -- (verify);
\draw[arrow] (facts) -- (verify);
\draw[arrow] (verify) -- (allow);
\draw[arrow] (verify) -- (review);
\draw[arrow] (verify) -- (deny);
\draw[arrow] (allow) -- (audit);
\draw[arrow] (review) -- (audit);
\draw[arrow] (deny) -- (audit);
\end{tikzpicture}
\caption{EBTE verifies an untrusted explanation against independently held facts. Every outcome produces minimized decision evidence.}
\ifEBTEIEEEBuild\else
  \Description{A tool proposal and authoritative server facts enter an explanation-contract verifier. The verifier routes the request to allow, non-executing review, or deny, and every branch emits minimized audit evidence.}
\fi
\label{fig:architecture}
\end{figure}

\subsection{Processing Stages}

The reference design has six stages:

\begin{enumerate}
  \item \textbf{Parse and bound.} Reject unknown schema versions, excessive field sizes, raw context, and malformed evidence references.
  \item \textbf{Resolve current facts.} Load the current intent, tool, policy, route, context, and preflight records under the authenticated application and actor.
  \item \textbf{Normalize.} Map explanation claims and server facts to a common operation-resource-effect representation.
  \item \textbf{Compare.} Evaluate every contract dimension with stable categorical results.
  \item \textbf{Route.} Apply the hard/soft decision rules and then the existing risk-sensitive action pipeline.
  \item \textbf{Record.} Store minimized, integrity-bound decision evidence and exclude raw untrusted content.
\end{enumerate}

\subsection{Stable Reason Categories}

Hard categories cover tool identity, target resources, effect bounds, omitted untrusted dependencies, and raw-context disclosure. Soft categories cover risk understatement, high uncertainty, and missing policy evidence. Stable external codes identify the failed contract dimension while omitting source content and system-topology detail.

\section{Composition with Agent Governance}

EBTE is the fourth layer in a compositional governance path:

\[
\text{intent} \rightarrow \text{context authority} \rightarrow \text{safe route} \rightarrow \text{explanation} \rightarrow \text{effect controls}.
\]

Intent-governed authorization narrows capabilities to the user's request~\cite{zhu2026igac}. Context-sensitive permission shrinkage reduces authority when untrusted content is present. Safe routing confines model selection to an eligible candidate set. EBTE then checks whether the claimed reason and expected effect agree with those upstream records and the actual payload. Finally, preflight, draft, confirmation, idempotency, and state-witness controls govern side effects.

The layers are not interchangeable. A correct explanation does not repair an unauthorized tool. A safe route does not prove payload bounds. A valid intent certificate does not show that all untrusted context dependencies were disclosed. Conversely, an explanation contradiction can provide an additional deny or review signal even when the underlying credential is broad.

\section{Reference Artifact and Evaluation Method}

\subsection{Research Questions}

\begin{itemize}
  \item \textbf{RQ1---Contract and conformance:} Which action claims and authoritative predicates constitute the reference profile, and does the implementation conform on the included mutations and metamorphic relations?
  \item \textbf{RQ2---Draft-only integration:} Can the verifier mediate a reference intent-to-draft path without forwarding authored hard contradictions, while preserving aligned and soft-review draft paths?
  \item \textbf{RQ3---Generation behavior:} What parse, schema, disposition, provider, and repair failure modes appear when four hosted model families emit the bounded contract?
  \item \textbf{RQ4---External semantic transfer:} On a pinned external task structure, what additional diagnostic distinction does authoritative task--proposal binding provide beyond the existing high-risk non-allow gate?
\end{itemize}

\subsection{Conditions}

We compare the full profile with three predicate ablations:

\begin{enumerate}
  \item \textbf{Free-form justification:} any nonempty summary satisfies the explanation requirement; high-risk tools still route to review.
  \item \textbf{Structured schema only:} required fields must be present, but their values are not compared with server facts.
  \item \textbf{Payload-bound explanation:} tool, operation, resource, and numeric payload bounds are compared; policy, risk, provenance, and privacy checks are omitted.
  \item \textbf{Full EBTE:} the reference-profile contract dimensions are checked, hard and soft contradictions are separated, and ordinary audit evidence is minimized.
\end{enumerate}

Expected dispositions and the full evaluator instantiate the same public profile. The study therefore measures implementation conformance and the incremental coverage of named predicates over the authored fixture set.

\subsection{Task Construction}

The suite contains \TaskFamilies{} abstract task families: bounded record summarization, bounded expense creation, record update, record export, record deletion, appointment creation, agent-version publication, and knowledge search. All identifiers and payloads are synthetic.

The evidence snapshot represented by the preamble macros contains one aligned scenario and \VariantsPerFamily{} total scenarios per family, including schema, tool, authorized-route, intent-class, operation, resource, effect-bound, policy-rule, context, disclosure, evidence-freshness, and evidence-completeness mutations. This yields \ConformanceTasks{} tasks and \ConformanceRows{} task--configuration rows. Expected outcomes follow the profile definition: aligned low/medium-risk cases allow, aligned high-risk cases review, hard contradictions deny, and soft or incomplete evidence reviews. A payload-outside-intent case keeps the explanation consistent with the payload so that only a verifier resolving the authoritative intent boundary can reject it. Table~\ref{tab:fieldauthority} records the source and tested status of each predicate.

\subsection{Metrics}

We report profile-disposition agreement (PDA), specified non-allow coverage (SNAC), hard-fixture allow rate (HFAR), soft-fixture allow rate (SFAR), aligned-fixture agreement (AFA), review-routing fraction (RRF), fact-reference resolvability (FRR), and explicit-fixture leak acceptance (EFLA). PDA measures agreement with the authored profile, and SNAC measures the fraction of designated non-aligned fixtures routed non-allow. RRF summarizes the deliberately enriched fixture mixture. FRR requires a valid bounded schema, exactly one current intent, policy, and payload digest, and a syntactically valid digest for every declared context dependency. These rates describe the complete finite fixture set.

The artifact also runs \MetamorphicChecks{} metamorphic checks. For every task, full EBTE must be at least as restrictive as the payload-bound configuration. For each registered hard mutation, the changed case must not yield allow or a less restrictive outcome than the aligned case.

\FloatBarrier
\section{Deterministic Results}

\begin{table}[h]
\centering
\scriptsize
\resizebox{\linewidth}{!}{%
\begin{tabular}{L{0.22\linewidth}rrrrrrrr}
\toprule
\textbf{Configuration} & \textbf{PDA} & \textbf{SNAC} & \textbf{HFAR} & \textbf{SFAR} & \textbf{AFA} & \textbf{RRF} & \textbf{FRR} & \textbf{EFLA} \\
\midrule
Free-form justification & \FreePDA{} & \FreeSNAC{} & \FreeHFAR{} & \FreeSFAR{} & \FreeAFA{} & \FreeRRF{} & \FreeSEC{} & \FreeEFLA{} \\
Structured schema only & \SchemaPDA{} & \SchemaSNAC{} & \SchemaHFAR{} & \SchemaSFAR{} & \SchemaAFA{} & \SchemaRRF{} & \SchemaSEC{} & \SchemaEFLA{} \\
Payload-bound explanation & \PayloadPDA{} & \PayloadSNAC{} & \PayloadHFAR{} & \PayloadSFAR{} & \PayloadAFA{} & \PayloadRRF{} & \PayloadSEC{} & \PayloadEFLA{} \\
Full EBTE & \FullPDA{} & \FullSNAC{} & \FullHFAR{} & \FullSFAR{} & \FullAFA{} & \FullRRF{} & \FullSEC{} & \FullEFLA{} \\
\bottomrule
\end{tabular}}
\caption{Authored profile-conformance and predicate-ablation results over \ConformanceTasks{} tasks per configuration.}
\label{tab:results}
\end{table}
\FloatBarrier

\subsection{RQ1: Predicate-Ablation Conformance}

Free-form justification applies only the review requirement already attached to high-risk tools and has HFAR \FreeHFAR{} and SFAR \FreeSFAR{} in this snapshot. Strict schema validation routes malformed and extra-field disclosure fixtures non-allow, yielding SNAC \SchemaSNAC{}, HFAR \SchemaHFAR{}, and SFAR \SchemaSFAR{}. Payload binding activates the authored tool, resource, operation, and numeric predicates, yielding SNAC \PayloadSNAC{} and HFAR \PayloadHFAR{}. It lacks the full profile's authoritative intent, route, context, freshness, risk, and policy comparisons.

Full EBTE returns non-allow for every registered non-aligned fixture in the current snapshot. It denies all \HardFixtures{} designated hard cases and routes all \SoftFixtures{} soft cases to review, matching the implemented rule table over the included scenarios.

\subsection{RQ1: Aligned Paths and Review Routing}

All four configurations preserve the authored disposition for all \AlignedFixtures{} aligned tasks. \AlignedAllows{} low- or medium-risk aligned tasks allow, and \AlignedReviews{} policy- or risk-sensitive aligned tasks review in the current snapshot. Full EBTE's RRF is \FullRRF{} because the suite deliberately includes those aligned review paths and all soft fixtures. Other configurations route different authored mutations to review as a consequence of their active predicates.

\subsection{RQ1: Structural Packets, Fact References, and Disclosure Fixtures}

Full EBTE emits a versioned structural packet for all \StructuralPacketChecks{} full-profile outcomes. The conformance checker verifies packet identity, decision and reason equality, all 18 stable predicate identifiers and categorical outcomes, per-operand authoritative availability, cap-identical normalized claim/action/fact operands, the three current authoritative digests in fixed type order, and exclusion of both free-text summaries and the fixed raw marker. It also records bounded Boolean claim diagnostics for raw disclosure, duplicate evidence type, missing policy basis, and schema-error truncation; these distinguish special schema dispositions without retaining the offending content. Predicate aggregation uses the total order
\(\mcode{hard}>\mcode{unknown}>\mcode{soft}>\mcode{match}\), independent of reason order. FRR is separately \FullSEC{} because stale, duplicate, or schema-invalid claims do not have a complete current claim-to-fact reference set even though their non-allow packet remains structurally well formed. The ablations emit no full EBTE packet and have FRR 0. Strict schema validation and full EBTE reject every included explicit \texttt{rawContent} fixture, and the artifact scanner reports no fixed synthetic marker in generated evidence rows.

\subsection{RQ1: Metamorphic Regression Checks}

All \MetamorphicChecks{} registered checks pass in the current snapshot. Full EBTE is never less restrictive than payload binding for the same task, and each designated hard mutation is non-allow and at least as restrictive as its aligned counterpart. These checks provide regression evidence for the stated decision order.

A separate control-flow regression applies an authoritative policy denial to
each task family. All \AuthoritativePolicyDenyChecks{} schema-valid claims and
all \AuthoritativePolicyMalformedDenyChecks{} malformed claims remain deny.
Thus, a generic schema-review route cannot downgrade an independently resolved
server denial. These auxiliary checks are outside the \ConformanceTasks{}
profile-task denominator and are reported separately. The same runner keeps
all \AuthoritativePolicyReviewChecks{} matching authoritative-review cases at
review, all \SchemaSoftHardDominanceChecks{} mixed schema-soft plus semantic-hard
cases at deny, all \RawSchemaHardDominanceChecks{} raw-schema-deny plus
semantic-hard cases with both hard outcomes retained, and all
\HardSoftPacketTruthChecks{} hard-plus-soft packet cases truthful on both
evaluated predicates. It also routes all
\EffectClaimOmissionChecks{} quantitative-bound,
\DestinationClaimOmissionChecks{} destination, and
\FieldClaimOmissionChecks{} field-set omissions to review while marking the
corresponding packet predicate unknown.

Compound evidence controls keep all \StaleIncompleteEvidenceChecks{}
stale-plus-duplicate cases at deny with freshness hard and completeness
unknown, while all \UnavailableAuthoritativeEvidenceChecks{} unavailable
current-digest cases remain review with both evidence predicates unknown.
Mixed-array controls distinguish \MixedMalformedAlignedSetChecks{} aligned
duplicate/invalid-extra cases, which remain schema review without a fabricated
semantic hard result, from \MixedMalformedKnownHardChecks{} cases containing a
parseable outside value, which remain deny. All
\SelfDenialMalformedChecks{} malformed sibling-field cases retain policy
self-denial, all \TypedRawDisclosureChecks{} object/array raw-field cases deny,
and the \MalformedExpectedEffectChecks{} malformed-effect family pairs return
review alone or deny when an independent hard mismatch is present.

Effect controls deny all \EffectUpperBoundChecks{} claims that exceed intent
and all \InapplicableEffectClaimChecks{} substantive claims on inapplicable
dimensions, while preserving \EffectSafeSlackChecks{} bounds satisfying
$\text{payload}\leq\text{claim}\leq\text{intent}$. Context controls deny all
\ContextExactnessHardChecks{} trusted-context or extra-pair mismatches,
preserve \DuplicateCurrentContextChecks{} duplicate-current pairs after set
normalization, and route all \UnavailableAuthoritativeContextChecks{}
unavailable current states to review with context unknown. Finally,
\AuthoritativeFactAvailabilityChecks{} field-scoped availability controls and
\OrthogonalAuthoritativeMalformedHardChecks{} malformed-fact plus independent
hard controls verify that unavailable operands cannot appear as matches or
mask safely resolved contradictions. The
\EmptyAuthoritativeObjectChecks{} empty intent/payload object pairs,
\MissingPayloadEnvelopeSiblingChecks{} missing payload-envelope siblings, and
\MissingIntentEnvelopeSiblingChecks{} missing intent-envelope siblings ensure
that optional absence is not misread as known inapplicability until the
corresponding authoritative envelope is available. Conversely,
\PresentOptionalOrthogonalHardChecks{} present-optional controls retain known
record, amount, field, and destination contradictions despite malformed
siblings; \ResourceOperandAvailabilityChecks{} controls independently preserve
resource-type and resource-ID results; and
\InvalidAuthoritativeNumericDomainChecks{} out-of-domain server bounds remain
unknown.

Policy and evidence controls independently cover
\SelfDenialUnavailableAuthorityChecks{} self-denials with unavailable current
policy dispositions, \PolicyBasisOperandAvailabilityChecks{} malformed scopes
with known rule mismatches, \ScopeFreePolicyBasisChecks{} valid empty/empty
scope sets, \EmptyScopeMissingChecks{} empty/nonempty scope differences, and
\EvidenceExtraKeyStaleChecks{} extra authoritative evidence keys combined with
known stale required digests. Canonical-replay controls include
\CanonicalScopeClaimChecks{} scope-tail shapes,
\WideAllowedFieldReplayChecks{} forty-field pair,
\OverCapAuthoritativeSetChecks{} over-cap route/resource family pairs,
\TailBoundedClaimChecks{} capped-claim family suites,
\TailAllowedFieldClaimChecks{} allowed-field tail,
\TailContextDependencyChecks{} context tails,
\CanonicalInvalidScalarChecks{} invalid scalar families, and
\InvalidAuthoritativeContextSourceChecks{} invalid current-source cases.

Finally, \ClaimAliasIsolationChecks{} claim-alias pairs,
\AuthoritativeEnumAvailabilityChecks{} authoritative-taxonomy controls,
\PacketDiagnosticCollisionChecks{} diagnostic collision pairs, and
\PacketOutcomeOrderChecks{} reason-order checks exercise the packet trust
boundary. Schema validation adds \UnicodeLengthChecks{} Unicode code-point
boundaries and \BoundedValidationResourceChecks{} oversized-array/error-cap
controls. These parsed-object checks do not replace the profile requirement
that an integration enforce the 1\,MiB UTF-8 transport limit before JSON
parsing. The runner also exhaustively verifies the
\JoinCompositionChecks{} baseline--EBTE--effect decision triples.

\FloatBarrier
\section{Draft-Only Reference Integration}

\subsection{Design}

To answer RQ2, we compose the reusable verifier with an OpenPort reference implementation through in-process HTTP injection at the intent and draft endpoints. \RuntimeFamilies{} task families map to implemented action tools: transaction creation, update, soft deletion, and bounded export. The evaluated snapshot contains \RuntimeTasks{} tasks and \RuntimeRows{} task--configuration rows.

For every task, the harness creates an intent certificate through the reference intent endpoint. If the explanation configuration returns allow or review, the wrapper submits a synthetic payload to the reference draft endpoint and binds the intent certificate. If the configuration returns deny, the draft endpoint is not called. Each row uses a fresh in-memory runtime, and every forwarded action terminates as a non-executing draft.

\subsection{Results}

\begin{table}[h]
\centering
\scriptsize
\resizebox{\linewidth}{!}{%
\begin{tabular}{L{0.22\linewidth}rrrrrrrr}
\toprule
\textbf{Configuration} & \textbf{HCFD} & \textbf{Soft draft} & \textbf{Aligned draft} & \textbf{Audit bind} & \textbf{FRR} & \textbf{Exec.} & \textbf{p95 verify ms} & \textbf{p95 total ms} \\
\midrule
Free-form justification & \RuntimeFreeHCFD{} & \RuntimeFreeSoftDraft{} & \RuntimeFreeAlignedDraft{} & \RuntimeFreeAudit{} & \RuntimeFreeSEC{} & \RuntimeFreeExec{} & \RuntimeFreeVerifyP{} & \RuntimeFreeTotalP{} \\
Structured schema only & \RuntimeSchemaHCFD{} & \RuntimeSchemaSoftDraft{} & \RuntimeSchemaAlignedDraft{} & \RuntimeSchemaAudit{} & \RuntimeSchemaSEC{} & \RuntimeSchemaExec{} & \RuntimeSchemaVerifyP{} & \RuntimeSchemaTotalP{} \\
Payload-bound explanation & \RuntimePayloadHCFD{} & \RuntimePayloadSoftDraft{} & \RuntimePayloadAlignedDraft{} & \RuntimePayloadAudit{} & \RuntimePayloadSEC{} & \RuntimePayloadExec{} & \RuntimePayloadVerifyP{} & \RuntimePayloadTotalP{} \\
Full EBTE & \RuntimeFullHCFD{} & \RuntimeFullSoftDraft{} & \RuntimeFullAlignedDraft{} & \RuntimeFullAudit{} & \RuntimeFullSEC{} & \RuntimeFullExec{} & \RuntimeFullVerifyP{} & \RuntimeFullTotalP{} \\
\bottomrule
\end{tabular}}
\caption{Local draft-only integration results over \RuntimeTasks{} tasks per configuration. HCFD is the fraction of \RuntimeHardFixtures{} designated hard cases forwarded to the draft endpoint. All forwarded requests are forced to draft; latency is a harness diagnostic.}
\label{tab:runtime}
\end{table}
\FloatBarrier

Free-form justification forwards every designated hard case to the draft endpoint. Strict schema validation blocks malformed cases but forwards hard semantic mismatches outside its scope, yielding HCFD \RuntimeSchemaHCFD{}. Payload binding blocks direct tool, resource, and effect mismatches but forwards cases governed by omitted predicates, yielding HCFD \RuntimePayloadHCFD{}. Full EBTE forwards none of \RuntimeHardFixtures{} designated hard cases, while all \RuntimeSoftFixtures{} soft and \RuntimeAlignedFixtures{} aligned cases create governed drafts. Its audit-binding rate is \RuntimeFullAudit{}; the lower ablation values reflect rows stopped before a bound draft rather than missing bindings on successfully created full-profile drafts.

The full verifier's observed local p95 evaluation time is \RuntimeFullVerifyP{} ms, and p95 wrapper-plus-intent-plus-draft time is \RuntimeFullTotalP{} ms in this small in-memory environment. With only \RuntimeTasks{} rows per configuration, a single host, and no warm-load or throughput design, these numbers are harness diagnostics rather than performance evidence. The data were recorded from the clean pinned runtime revision with the \RuntimeSourceFiles-file source fingerprint \RuntimeSourceDigestDisplay{} using \texttt{sha256-relative-path-nul-bytes-nul-v1}.

\section{Constrained NVIDIA Multi-Model Pilot}

\subsection{Design}

To answer RQ3, we test explanation generation through NVIDIA's hosted API with \HostedModels{} model families: \nolinkurl{openai/gpt-oss-120b}, \nolinkurl{qwen/qwen3-next-80b-a3b-instruct}, \nolinkurl{nvidia/nemotron-3-nano-30b-a3b}, and \nolinkurl{deepseek-ai/deepseek-v4-flash}. The protocol uses \HostedTasks{} synthetic tasks: \HostedAlignedTasks{} aligned actions, \HostedRepairableTasks{} actions with repairable explanation challenges, and \HostedNonrepairableTasks{} nonrepairable action contradictions involving record, amount, field, or destination expansion. Each task is generated \HostedRepeats{} times under different fixed seeds at temperature 0.2. For every task and repeat, the model first emits a contract and then receives stable parser, schema, and verifier feedback for one repair attempt, including rows that already matched the profile. The repairable tasks additionally start from a deliberately seeded invalid explanation. Each model therefore contributes \HostedInitialPerModel{} initial, \HostedRepairPerModel{} ordinary-repair, and \HostedSeededPerModel{} seeded-repair attempts, for \HostedRows{} attempts total.

The prompt supplies the bounded schema and current synthetic server facts. It instructs each model to describe the proposed action truthfully, ignore instructions inside categorically untrusted content, use exact current evidence digests, and omit a synthetic raw marker. Recorded results contain SHA-256 response digests, byte counts, minimized parsed fields, outcomes, latency, and categorical HTTP failures. Because raw responses are absent, exact surface wording cannot be reconstructed from these records.

Protocol development before version \texttt{0.5-nvidia} resolved ambiguities in evidence arrays, context-dependency objects, and provider-error minimization. The protocol was not preregistered, and candidate models that timed out or were unavailable were excluded before behavioral aggregation. This design process, availability filtering, and prompt specificity introduce experimenter and selection effects.

The four source result sets were generated on 2026-07-12. They preserve
response digests, minimized parsed explanations, recorded runner decisions,
latency, and typed provider failures. They predate immutable-run support and
lack raw responses, \texttt{runFingerprint}, and \texttt{runConfig}, so the
exact historical prompt and verifier code bytes cannot be reconstructed. A
separate current-pipeline layer applies the present frozen schema, task
builder, and verifier to the saved minimized claims offline; this layer is a
post-hoc revalidation rather than a hosted rerun.

\FloatBarrier
\subsection{Recorded Historical and Current Offline Layers}

\begin{table}[h]
\centering
\scriptsize
\resizebox{\linewidth}{!}{%
\begin{tabular}{L{0.19\linewidth}rrrrrrrr}
\toprule
\textbf{Model} & \textbf{Hist. init parse} & \textbf{Hist. init schema} & \textbf{Hist. init agree} & \textbf{Hist. repair agree} & \textbf{Hist. seed agree} & \textbf{Hist. provider err.} & \textbf{Hist. nonrep. allow} & \textbf{Hist. marker} \\
\midrule
\ModelGPTTableRow
\ModelQwenTableRow
\ModelNemotronTableRow
\ModelDeepSeekTableRow
\bottomrule
\end{tabular}}
\caption{Frozen 2026-07-12 recorded hosted-generation/runner layer. Agreement excludes provider failures from the numerator while retaining all attempted calls in the primary denominator; parentheses give conditional-on-response values where they differ. These are historical runner decisions, not current-verifier dispositions.}
\label{tab:modelpilot}
\end{table}
\FloatBarrier

\begin{table}[h]
\centering
\scriptsize
\resizebox{\linewidth}{!}{%
\begin{tabular}{L{0.15\linewidth}rrrrrrrrrr}
\toprule
\textbf{Layer} & \textbf{Attempts} & \textbf{Responses} & \textbf{Provider fail} & \textbf{Saved claims} & \textbf{Parse fail} & \textbf{Hist. conformant} & \textbf{Init agree} & \textbf{Repair agree} & \textbf{Seed agree} & \textbf{Current A/R/D} \\
\midrule
Current post-hoc offline & \HostedCurrentAttempts{} & \HostedCurrentResponses{} & \HostedCurrentProviderFailures{} & \HostedCurrentSavedClaims{} & \HostedCurrentParseFailures{} & \HostedHistoricalConformantRows{} & \HostedCurrentInitialAgreementAll{} (\HostedCurrentInitialAgreementResponse{}) & \HostedCurrentRepairAgreementAll{} (\HostedCurrentRepairAgreementResponse{}) & \HostedCurrentSeededAgreementAll{} (\HostedCurrentSeededAgreementResponse{}) & \HostedCurrentAllows{}/\HostedCurrentReviews{}/\HostedCurrentDenies{} \\
\bottomrule
\end{tabular}}
\caption{Current-pipeline post-hoc offline revalidation of the immutable minimized 2026-07-12 source layer. Parentheses give conditional-on-response agreement. It made zero model, provider, or network calls and is not a new generation run.}
\label{tab:modelrevalidation}
\end{table}
\FloatBarrier

Across the frozen recorded calls, historical generated-output/runner agreement excluding provider failures is \HostedInitialAgreementAll{} initially, \HostedRepairAgreementAll{} after ordinary feedback, and \HostedSeededAgreementAll{} for seeded-defect repair. Conditional on receiving a provider response, the corresponding historical values are \HostedInitialAgreementResponse{}, \HostedRepairAgreementResponse{}, and \HostedSeededAgreementResponse{}. Provider availability is therefore reported separately from recorded generation agreement rather than being credited when a transport failure happened to route to the authored review disposition. Variation in this historical layer is substantial: GPT-OSS and Qwen agree on every initial attempt, Nemotron has frequent parse/schema failures, and DeepSeek agrees on every successful response but has substantial provider failure.

In the recorded historical layer, among the \HostedRepairRows{} paired initial/ordinary-repair attempts, \HostedCorrectToCorrect{} remain agreeing, \HostedCorrectToIncorrect{} move from agreement to disagreement, \HostedIncorrectToCorrect{} move from disagreement to agreement, and \HostedIncorrectToIncorrect{} remain disagreeing. Historical feedback is therefore not monotonically associated with recorded profile agreement. For the \HostedNonrepairableRows{} historical post-feedback attempts involving nonrepairable actions, \HostedNonrepairableDeny{} return deny and \HostedNonrepairableReview{} return non-executing review; none returns allow. Across all \HostedExpectedDenyRows{} recorded expected-deny initial/repair attempts, the allow count is also zero. No recorded output contains the fixed synthetic marker (0/\HostedRows{}). The \HostedProviderErrors{}/\HostedRows{} historical provider failures are retained as attempted calls and route to review.

The current offline layer evaluates all \HostedCurrentAttempts{} immutable
attempt records: \HostedCurrentResponses{} retained provider responses,
\HostedCurrentProviderFailures{} typed provider failures,
\HostedCurrentSavedClaims{} saved parsed explanations, and
\HostedCurrentParseFailures{} response parse failures. Its
\HostedHistoricalConformantRows{} historically schema-conformant rows pass the
current task-fact evidence/context consistency precheck. Current disposition
agreement is \HostedCurrentInitialAgreementAll{},
\HostedCurrentRepairAgreementAll{}, and
\HostedCurrentSeededAgreementAll{} over all corresponding attempts, or
\HostedCurrentInitialAgreementResponse{},
\HostedCurrentRepairAgreementResponse{}, and
\HostedCurrentSeededAgreementResponse{} conditional on a provider response.
The resulting current operational dispositions are
\HostedCurrentAllows{} allow, \HostedCurrentReviews{} review, and
\HostedCurrentDenies{} deny. Within the \HostedNonrepairableRows{}
nonrepairable post-feedback records, the current pipeline returns
\HostedCurrentNonrepairableDeny{} deny,
\HostedCurrentNonrepairableReview{} review, and
\HostedCurrentNonrepairableAllow{} allow. Across the
\HostedExpectedDenyRows{} current expected-deny initial/repair records, the
allow count is \HostedCurrentExpectedDenyAllow{}.

Relative to the recorded historical runner disposition,
\HostedCurrentDecisionChanges{} received-response rows change:
\HostedCurrentAllowToReview{} allow-to-review,
\HostedCurrentAllowToDeny{} allow-to-deny, and
\HostedCurrentReviewToDeny{} review-to-deny. There are
\HostedCurrentDenyDowngrades{} deny-to-review or deny-to-allow transitions.
These are verifier-version transitions, not changes in model behavior. The
derived layer is bound to provenance fingerprint
\HostedCurrentFingerprintDisplay{} and made
\HostedCurrentProviderCalls{} network, model, or provider calls. Because the
historical sources lack \texttt{runFingerprint}, \texttt{runConfig}, raw
responses, and exact historical prompt/code bytes, the current layer cannot
retroactively authenticate or reproduce the original online configuration.

The historical layer records cross-model interface behavior on the
fact-complete suite, while the current layer records post-hoc dispositions
recoverable from the minimized saved fields. The reported scope is limited to
fail-closed routing, generation failures, and repair transitions in these
records.

\FloatBarrier
\section{Pinned AgentDojo Semantic Adaptation}

\subsection{Design and Provenance}

To answer RQ4, we reuse a minimized proposal inventory derived from a pinned AgentDojo~\cite{debenedetti2024agentdojo} banking-suite semantic adapter: release \texttt{0.1.35}, clean revision \AgentDojoRevisionDisplay, and benchmark \texttt{v1.2.2}. The source inventory contains \ExternalProposals{} proposal records: \ExternalBenign{} calls from \ExternalOrdinaryTasks{} ordinary user tasks, \ExternalDirect{} calls representing legitimate direct user goals, and \ExternalAttacks{} cross-task attack proposals formed by pairing each ordinary task with each direct-injection goal. The adaptation retains upstream task identifiers, paired-task relations, and public tool names.

For every record, the proposed explanation truthfully matches the proposed tool and payload. Ordinary and legitimate direct-goal proposals bind the payload to the current task identifier. Cross-task attacks instead bind the payload to the paired injection-task identifier while the authoritative intent certificate remains scoped to the ordinary user task. Untrusted-context evidence is declared and current, so task--proposal inconsistency is the only intended hard contradiction. The same four deterministic configurations process all \ExternalProposals{} proposals, producing \ExternalRows{} rows. High-impact banking effects retain a review disposition even when legitimate.

\subsection{Results and Boundary}

\begin{table}[h]
\centering
\scriptsize
\resizebox{\linewidth}{!}{%
\begin{tabular}{L{0.25\linewidth}rrrrrr}
\toprule
\textbf{Configuration} & \textbf{Profile agree} & \textbf{Attack deny} & \textbf{Attack non-allow} & \textbf{Benign path} & \textbf{Direct path} & \textbf{FRR} \\
\midrule
\AgentAblationTableRow{Free-form justification}
\AgentAblationTableRow{Structured schema only}
\AgentAblationTableRow{Payload-bound explanation}
\AgentFullTableRow
\bottomrule
\end{tabular}}
\caption{Pinned AgentDojo semantic adaptation over \ExternalProposals{} proposals per configuration. ``Direct path'' denotes legitimate direct user goals, not injected goals embedded in another task.}
\label{tab:agentdojo}
\end{table}
\FloatBarrier

All configurations keep every attack proposal non-allow because the public banking effect tools in this subset are high risk and therefore review-gated. Full EBTE changes the diagnostic disposition of the \ExternalAttacks{}/\ExternalAttacks{} task--proposal contradictions from review to deny; the immediate non-allow rate remains \ExternalAttacks{}/\ExternalAttacks{} under every ablation. All configurations preserve the expected disposition of \ExternalBenign{}/\ExternalBenign{} benign and \ExternalDirect{}/\ExternalDirect{} legitimate direct-goal proposals. Full EBTE also produces structurally complete packets for \ExternalProposals{}/\ExternalProposals{} proposals.

This result tests transfer of one contract predicate to a public benchmark task structure. The adapter uses a self-contained \ExternalProposals-record snapshot derived from a verified minimized source artifact and deterministic public tool semantics; both the snapshot and its upstream source hash are recorded. Its unit of analysis is the task--proposal binding rather than an end-to-end AgentDojo trajectory.

\section{Discussion}

\subsection{Evidence Across Evaluation Tiers}

The authored suite measures whether the implementation realizes the named profile predicates. The draft-only integration places the verifier before a reference draft boundary and binds its outcome into the created object. The frozen hosted record characterizes contract generation under one constrained historical interface, while the zero-call layer routes the preserved minimized claims under the current pipeline. The external semantic adaptation adds the diagnostic distinction between task-bound denial and generic high-risk review.

\subsection{Attributable Fail-Closed Routing}

These tiers jointly demonstrate an attributable fail-closed interface: a proposal that cannot satisfy the current profile receives a named non-allow disposition and a bounded packet. Its coverage depends on the integrity and completeness of authoritative facts, predicates, and enforcement. Queue overload, automation bias, and approval paths that omit revalidation remain operational risks.

\subsection{Design Trade-offs}

More authoritative predicates improve diagnostic specificity but increase integration cost, fact freshness requirements, and the probability of review when state is unavailable. Bounded summaries and digests reduce ordinary disclosure but make later semantic investigation dependent on separately authorized source retention. Detailed reason codes improve repairability and audit attribution but can become an adaptive oracle; model-facing feedback should therefore be coarser than privileged audit evidence. Finally, the hosted transition counts show that automatic repair is not monotonically beneficial. Deployments need budgets, stopping rules, and queue controls rather than an unbounded repair loop.

\subsection{Implications for Agent-System Architecture}

EBTE is most useful where an application already owns authoritative intent, policy, route, and effect facts. It does not create those facts and should not be placed inside the same untrusted planner that generated the proposal. The verifier belongs at a reference-monitor boundary, while source retrieval, human review, and effect execution remain separately governed. This placement makes the explanation useful even when it is not faithful introspection: each enforceable statement becomes a proposition that the application can independently accept, reject, or mark unresolved.

\section{Security and Privacy Analysis}

\subsection{Plausible False Explanations}

An attacker may produce a fluent explanation that matches the payload superficially. EBTE ignores fluency as an authorization signal. Typed claims must agree with current server facts. If the attacker controls both payload and explanation, upstream intent, policy, context, route, and effect evidence remain independent comparison points.

\subsection{Explanation Generator and Planner Collusion}

Using the same model for planning and explanation creates correlated failure. EBTE does not assume independence between them. Both outputs are untrusted. A separate model verifier may improve empirical detection but is not part of the core guarantee unless its outputs are themselves bounded by deterministic rules.

\subsection{Stale Evidence}

Explanation references are scoped to current intent, policy, route, context, and preflight versions. A digest match to an expired or superseded record is insufficient. The effect path must revalidate current state before execution. This is essential because a previously correct explanation can become unsafe after policy or context changes.

\subsection{Information Leakage}

Explanations can become a secondary exfiltration channel. Most enforceable profile fields are categorical labels, bounded identifiers, or digests, and an explicit raw-content field is rejected. The bounded \texttt{intentSummary} remains free text, however, and the fixed-marker test does not detect novel encoding. Encoded-disclosure coverage therefore depends on semantic content scanning, retention controls, output budgets, and access-separated audit export.

\subsection{Human Automation Bias}

A verified packet may increase operator confidence even when upstream facts are wrong. Human-AI studies show that interface design can affect overreliance, and security warnings can lose effectiveness when interventions are poorly targeted~\cite{bucinca2021trust,sunshine2009warnings}. EBTE labels what was verified and what remains uncertain. It must not present mechanism properties as semantic correctness. High-risk review interfaces should show effect bounds and failed predicates prominently, support source access through separately authorized channels, and measure unsafe approval rather than trust alone.

\section{Related Work}

\textbf{Explainable AI, reasoning traces, and human decision support.}
LIME and broader XAI research ask how explanations help people interpret model outputs~\cite{ribeiro2016lime,miller2019explanation}. Interpretability may be underspecified, misunderstood, or insufficient to improve decisions~\cite{lipton2018mythos,kaur2020interpreting,poursabzi2018manipulating}. ReAct interleaves reasoning and acting~\cite{yao2023react}, while chain-of-thought studies show that stated reasoning can be weakly coupled to the factors that caused an output~\cite{turpin2023unfaithful,lanham2023faithfulness,yee2024dissociation}. Human-AI guidelines and cognitive-forcing work further motivate measuring overreliance~\cite{amershi2019guidelines,bucinca2021trust}. EBTE instead targets a server-checkable statement about an external effect.

\textbf{Explainable access control and authorization provenance.}
Explainable security and explainable access-control research studies how policy decisions can be justified to users and administrators~\cite{vigano2020xsec,mehri2025xac}. SmartAuth, AWare, decision-provenance, and access-provenance systems connect user intent, contextual evidence, or policy history to permission decisions~\cite{tian2017smartauth,petracca2017aware,singh2019decision,capobianco2017accessprov}. EBTE shares the reference-monitor principle and the need for attributable policy evidence, but its input is different: an untrusted agent supplies a proposed tool, payload, and bounded action claim after model planning. The gateway must compare that claim across intent, route, payload, policy, risk, and context views without allowing the explanation to create authority.

\textbf{Agent security, specifications, and attribution.}
AgentDojo, InjecAgent, ToolHijacker, and MCPTox evaluate attacks on agent context and tool use~\cite{debenedetti2024agentdojo,zhan2024injecagent,shi2025toolhijacker,wang2025mcptox}. StruQ protects the instruction--data boundary~\cite{chen2024struq}; AttriGuard attributes tool invocations to causal context~\cite{he2026attriguard}. CaMeL separates control and data flows for agent execution, Fides applies information-flow control, and Tracked Capabilities associates authority with tracked runtime values~\cite{debenedetti2026camel,costa2025fides,odersky2026tracked}. AgentSpec and work on verifiably safe agent behavior pursue explicit constraints and checkable execution properties~\cite{wang2026agentspec,doshi2026verifiablysafe}. EBTE is complementary to prompt isolation, information-flow enforcement, causal attribution, capability tracking, and action specifications: it consumes server facts from those layers and returns a non-widening action disposition plus a bounded evidence packet.

\textbf{Auditability and standards.}
Auditable Agents frames recoverability, policy checkability, responsibility attribution, and evidence integrity as system properties~\cite{nian2026auditable}; Agent Audit analyzes code and deployment artifacts before execution~\cite{zhang2026agentaudit}. Least privilege and ABAC ground EBTE's non-widening composition~\cite{saltzer1975,nist2014abac}. MCP specifies tool and transport-authorization semantics~\cite{mcp2025tools,mcp2025authorization}, while NIST and OWASP motivate TEVV, least privilege, and human oversight~\cite{nist2024genai,owasp2025promptinjection}. EBTE contributes an application-layer, runtime action-claim object above transport authorization; it does not replace policy provenance, transport identity, or effect controls.

\begin{table}[h]
\centering
\scriptsize
\resizebox{\linewidth}{!}{%
\begin{tabular}{L{0.20\linewidth}L{0.20\linewidth}L{0.16\linewidth}L{0.20\linewidth}L{0.18\linewidth}}
\toprule
\textbf{Approach family} & \textbf{Primary object} & \textbf{Authoritative cross-check} & \textbf{Tool/effect/freshness binding} & \textbf{Relation to authority} \\
\midrule
XAI and rationale faithfulness & Model explanation or reasoning trace & Usually model/output evidence & Not the primary target & Interpretive; not an authorization rule \\
Explainable access control & Policy decision and justification & Policy state and attributes & Policy/resource centered & Explains an existing access decision \\
Intent/provenance-aware permission systems & User intent, context, or provenance & OS/application facts & Domain-specific and often pre-action & Narrows or attributes permission use \\
Agent prompt/tool defenses & Prompt structure, tool behavior, or causal context & Defense-specific evidence & Often attack- or prompt-centered & Prevents or attributes selected failures \\
Agent specifications and safety monitors & Explicit behavioral/action constraints & Monitor or formal specification & Strong when modeled & Constrains execution under the specification \\
EBTE & Untrusted typed action claim & Intent, registry, route, payload, policy, risk, and context & Cross-view, versioned, pre-effect & Joins with existing controls; never widens them \\
\bottomrule
\end{tabular}}
\caption{Primary objects, authoritative cross-checks, effect binding, and authority semantics across EBTE and adjacent research.}
\label{tab:relatedcomparison}
\end{table}
\FloatBarrier

\section{Limitations and Threats to Validity}

The deterministic suite is small, authored, and partially generated from \TaskFamilies{} base tasks. Expected outcomes are defined by the same profile implemented by the evaluator, so profile-disposition agreement measures conformance within that profile. The predicate ablations are controlled mechanism variants. The AgentDojo adaptation uses a minimized proposal inventory without environment transitions, arguments, model interaction, or upstream scoring. All \ExternalAttacks{} adapted attacks are already non-allow under the high-risk ablations, so EBTE's observed increment is diagnostic denial rather than additional immediate blocking.

The draft-only wrapper uses a single in-memory clean pinned reference runtime and forced drafts; effect execution, action-time revalidation, concurrency, durability, throughput, and production latency remain unevaluated. The historical hosted pilot uses \HostedModels{} model families, \HostedTasks{} fact-complete synthetic tasks per model, \HostedRepeats{} dependent seeded repeats, and a highly constrained prompt that was iteratively debugged before the reported protocol was frozen. Availability filtering was not preregistered, and one model family accounts for all \HostedProviderErrors{} provider failures. The historical files have no run fingerprint/configuration or raw responses; digest-minimized fields prevent independent inspection of exact wording and make the current offline revalidation necessarily post hoc. The study includes no naturalistic explanations, adaptive attacks, production inference workloads, or human participants.

Structural-packet conformance means that the \StructuralPacketChecks{} authored full-profile outcomes contain the registered identity, decision, reason, predicate, per-operand availability, normalized-operand, bounded diagnostic, and fixed-order digest fields and exclude the checked raw fields or marker. FRR reports whether the claim's required digests resolve exactly to current fixture facts. Human comprehension, retained-source resolution, pre-parse byte-limit wiring, cross-domain identity binding, and semantic detection of novel sensitive content remain outside these structural checks.

The design invariants assume correct identity binding, intent, route, tool metadata, policy, risk labels, digest binding, current-state lookup, action-time recomputation, and complete mediation. Misconfigured facts, compromised gateways, malicious authorized tool code, review abuse, out-of-band actions, and incorrect provenance remain residual risks.

\section{Reproducibility and Ethics}

The internal evaluation artifact includes the task catalog, reusable verifier, deterministic evaluator, runtime wrapper, NVIDIA-only model runner, cross-model aggregator, pinned AgentDojo semantic adapter, authoritative JSON results, minimized task-level JSONL traces, derived Markdown reports, a Draft 2020--12 JSON Schema, a machine-readable conformance profile, and a hash-bound cross-artifact verifier. The conformance runner checks all \ConformanceTasks{} expected full-profile decisions, all \MetamorphicChecks{} registered metamorphic predicates, aligned schema validity, malformed-schema rejection, and registration of every emitted reason code. The external adapter additionally asserts the upstream release, clean revision, benchmark version, suite, minimized-record boundary, and exact \ExternalProposals-proposal inventory before producing \ExternalRows{} rows.

Deterministic conformance, runtime mediation, semantic adaptation, and
current hosted-record revalidation run offline; the frozen historical
generation record used NVIDIA's hosted API. The reported records contain
synthetic fixtures, public benchmark identifiers and tool names, categorical
relations, digests, outcome categories, minimized parsed fields, and reason
names. Original-harness records marked
\texttt{coverage.complete=false} are incomplete.

\subsection{Data and Code Availability}

Source code and task-level evaluation data are maintained as an internal research artifact and are not distributed with this manuscript.

\subsection{Ethics}

The reported evaluation uses synthetic inputs and public benchmark semantics, involves no human participants, and terminates reference-runtime requests as non-executing drafts.

\subsection{Funding and Competing Interests}

Accentrust provided research stewardship and computing resources. Both authors are affiliated with Accentrust; that relationship is disclosed as a potential competing interest because EBTE composes with public OpenPort abstractions stewarded by the same organization.

\subsection{Author Contributions}

Genliang Zhu contributed conceptualization, methodology, software, investigation, artifact curation, and the initial manuscript. Chu Wang contributed conceptualization, validation, formal and empirical review, research supervision, and manuscript revision. Both authors approved the manuscript.

\section{Conclusion}

EBTE changes the role of an agent explanation from persuasive prose to an untrusted action claim. Its value is not to reveal why a model reasoned as it did, but to let a reference monitor compare declared tool, effect, provenance, and evidence claims with authoritative application facts before authority is exercised. Under explicit trust and mediation assumptions, the composed decision cannot be less restrictive than baseline authorization or effect control, and review cannot execute without a fresh action-time decision.

The evaluation establishes conformance on the authored profile, draft-bound mediation in the reference integration, heterogeneous hosted-model generation and repair behavior, and a diagnostic deny distinction on the AgentDojo-derived task structure. EBTE provides a composable governance and structural-audit interface whose enforceable action claims can be checked independently at the tool-effect boundary. Future work will evaluate broader attack distributions, reviewer interaction, and production operating conditions.

\ifEBTEIEEEBuild
  \appendices
\else
  \appendix
\fi

\section{Reference Explanation Schema}

\begin{lstlisting}[language=json,caption={Sanitized EBTE explanation example. Digests and identifiers are synthetic.}]
{
  "schemaVersion": "ebte-0.1",
  "intentClasses": ["export"],
  "intentSummary": "Export the bounded requested collection.",
  "selectedTool": "records.export",
  "toolReasonCode": "tool.effect_matches_intent",
  "policyBasis": {
    "requiredScopes": ["record.read", "record.export"],
    "policyRuleIds": ["policy.export.review"],
    "decision": "review"
  },
  "expectedEffect": {
    "operation": "export",
    "resourceType": "record",
    "resourceIds": ["collection_a"],
    "maxRecords": 50,
    "destination": "user_download"
  },
  "riskTier": "high",
  "uncertainty": {"level": "low", "reasonCodes": []},
  "untrustedContextDependencies": [],
  "evidenceRefs": [
    {"type": "intent_certificate", "digest": "sha256:0000000000000000000000000000000000000000000000000000000000000000"},
    {"type": "policy_snapshot", "digest": "sha256:1111111111111111111111111111111111111111111111111111111111111111"},
    {"type": "payload", "digest": "sha256:2222222222222222222222222222222222222222222222222222222222222222"}
  ]
}
\end{lstlisting}

\FloatBarrier
\section{Decision and Reason-Code Profile}

\begin{table}[tbp]
\centering
\footnotesize
\begin{tabular}{L{0.31\linewidth}L{0.16\linewidth}L{0.43\linewidth}}
\toprule
\textbf{Reason category} & \textbf{Default} & \textbf{Meaning} \\
\midrule
\nolinkurl{tool_mismatch} & Deny & Declared and proposed tool identities differ. \\
\nolinkurl{route_outside_authorized_set} & Deny & Proposed or declared tool is absent from the server-owned current candidate set. \\
\nolinkurl{intent_class_mismatch} & Deny & Declared intent classes do not exactly match the current certificate. \\
\nolinkurl{intent_class_overbreadth} & Deny & At least one declared class is outside the current certificate. \\
\nolinkurl{operation_outside_intent} & Deny & The payload operation is absent from the current intent's operation set. \\
\nolinkurl{resource_id_mismatch} & Deny & Explanation and payload target different resources or an overbroad resource set. \\
\nolinkurl{resource_type_mismatch} & Deny & Declared and proposed resource types differ. \\
\nolinkurl{operation_mismatch} & Deny & Declared and proposed operations differ. \\
\nolinkurl{record_bound_mismatch} & Deny & Record bound fails to cover payload, exceeds intent, or is substantive but inapplicable. \\
\nolinkurl{amount_bound_mismatch} & Deny & Amount bound fails to cover payload, exceeds intent, or is substantive but inapplicable. \\
\nolinkurl{field_bound_mismatch} & Deny & Field set fails to cover payload, exceeds intent, or is substantive but inapplicable. \\
\nolinkurl{destination_mismatch} & Deny & Destination differs from payload or is substantive but inapplicable. \\
\nolinkurl{payload_outside_intent} & Deny & Payload exceeds current certified intent even when its explanation matches. \\
\bottomrule
\end{tabular}
\caption{Reference hard-denial reasons for route, intent, resource, operation, and payload confinement.}
\label{tab:reason-action-deny}
\end{table}

\begin{table}[tbp]
\centering
\footnotesize
\begin{tabular}{L{0.31\linewidth}L{0.16\linewidth}L{0.43\linewidth}}
\toprule
\textbf{Reason category} & \textbf{Default} & \textbf{Meaning} \\
\midrule
\nolinkurl{policy_rule_mismatch} & Deny & Declared rule identifiers differ from the current applicable rule set. \\
\nolinkurl{authoritative_policy_denial} & Deny & Current server policy denies the action even when the claim acknowledges denial. \\
\nolinkurl{policy_decision_contradiction} & Deny & Current policy denies while the explanation claims a less restrictive decision. \\
\nolinkurl{policy_self_denial} & Deny & The explanation declares denial; a claim cannot use that declaration to obtain a weaker route. \\
\nolinkurl{stale_evidence} & Deny & A required evidence digest differs from current server state. \\
\nolinkurl{untrusted_dependency_omitted} & Deny & Material untrusted context lacks categorical provenance evidence. \\
\nolinkurl{context_dependency_mismatch} & Deny & The normalized dependency-pair set contains a substantive pair outside the current authoritative context state. \\
\nolinkurl{raw_context_disclosure} & Deny & Explanation attempts to persist raw untrusted content. \\
\bottomrule
\end{tabular}
\caption{Reference hard-denial reasons for policy, evidence freshness, provenance, and disclosure.}
\label{tab:reason-policy-deny}
\end{table}

\begin{table}[tbp]
\centering
\footnotesize
\begin{tabular}{L{0.31\linewidth}L{0.16\linewidth}L{0.43\linewidth}}
\toprule
\textbf{Reason category} & \textbf{Default} & \textbf{Meaning} \\
\midrule
\nolinkurl{schema_incomplete} & Review & Required typed evidence is missing or unsupported. \\
\nolinkurl{risk_understatement} & Review & Declared risk is below server-owned risk. \\
\nolinkurl{policy_basis_missing} & Review & Required scope or policy evidence is incomplete. \\
\nolinkurl{policy_decision_mismatch} & Review & Declared allow/review disposition differs from current policy without overriding a denial. \\
\nolinkurl{authoritative_policy_review} & Review & Current server policy requires review even when the declared disposition matches. \\
\nolinkurl{effect_bound_claim_incomplete} & Review & An applicable quantitative bound is absent from the typed effect claim. \\
\nolinkurl{destination_claim_incomplete} & Review & An applicable external destination is absent from the typed effect claim. \\
\nolinkurl{field_claim_incomplete} & Review & An applicable field set is absent from the typed effect claim. \\
\nolinkurl{duplicate_evidence_type} & Review & Required evidence types are not one-to-one and complete. \\
\nolinkurl{evidence_incomplete} & Review & The claim lacks an exact one-per-type set of format-valid required references. \\
\nolinkurl{high_uncertainty} & Review & Agent declares unresolved ambiguity. \\
\nolinkurl{high_risk_review} & Review & Current server-owned tool risk requires independent review. \\
\bottomrule
\end{tabular}
\caption{Reference review-escalation reasons. Deployments may use different external codes while preserving semantics.}
\label{tab:reason-review}
\end{table}

\begin{table}[tp]
\centering
\scriptsize
\begin{tabular}{L{0.40\linewidth}L{0.14\linewidth}L{0.36\linewidth}}
\toprule
\textbf{Reason category} & \textbf{Default} & \textbf{Unavailable authoritative operand} \\
\midrule
\nolinkurl{authoritative_evidence_unavailable} & Review & Exact current intent/policy/payload digest set. \\
\nolinkurl{authoritative_context_unavailable} & Review & Trusted/untrusted flag or required current context pair. \\
\nolinkurl{authoritative_tool_unavailable} & Review & Canonical tool identity. \\
\nolinkurl{authoritative_route_unavailable} & Review & Current authorized-route set. \\
\nolinkurl{authoritative_intent_classes_unavailable} & Review & Current intent-class set. \\
\nolinkurl{authoritative_intent_operations_unavailable} & Review & Certified operation set. \\
\nolinkurl{authoritative_intent_resource_unavailable} & Review & Certified resource type or target set. \\
\nolinkurl{authoritative_intent_effect_unavailable} & Review & Applicable intent record, amount, or field bounds. \\
\nolinkurl{authoritative_intent_destination_unavailable} & Review & Certified destination state. \\
\nolinkurl{authoritative_payload_operation_unavailable} & Review & Actual payload operation. \\
\nolinkurl{authoritative_payload_resource_unavailable} & Review & Actual payload resource or target. \\
\nolinkurl{authoritative_payload_effect_unavailable} & Review & Applicable payload record, amount, or field values. \\
\nolinkurl{authoritative_payload_destination_unavailable} & Review & Actual payload destination state. \\
\nolinkurl{authoritative_policy_basis_unavailable} & Review & Canonical scope or rule set. \\
\nolinkurl{authoritative_policy_decision_unavailable} & Review & Current allow/review/deny policy disposition. \\
\nolinkurl{authoritative_risk_unavailable} & Review & Canonical low/medium/high tool risk. \\
\bottomrule
\end{tabular}
\caption{Stable review reasons for unavailable authoritative facts. Each marks only its dependent packet predicate or predicates unknown; a known hard result on another available operand still dominates.}
\label{tab:reason-authoritative-unavailable}
\end{table}

\FloatBarrier
\section{Scenario Inventory}

\begin{table}[h]
\centering
\small
\begin{tabular}{L{0.25\linewidth}L{0.27\linewidth}L{0.38\linewidth}}
\toprule
\textbf{Variant} & \textbf{Expected outcome} & \textbf{Construct tested} \\
\midrule
Aligned & Allow or high-risk review & Benign compatibility and risk routing. \\
Malformed or unknown schema & Review & Strict parse and bounded-schema failure. \\
Tool mismatch & Deny & Tool identity binding. \\
Unauthorized route & Deny & Proposed tool membership in the current server-owned candidate set. \\
Intent-class overreach & Deny & Declared class confinement to the current intent certificate. \\
Operation outside intent & Deny & Payload operation binding to the certified effect class. \\
Resource mismatch & Deny & Target-resource binding. \\
Effect mismatch & Deny & Operation or quantitative effect binding. \\
Risk understatement / high uncertainty & Review & Soft escalation without attack attribution. \\
Missing policy basis & Review & Independent policy-evidence requirement. \\
Fabricated policy rule & Deny & Current policy-rule identifier binding. \\
Undeclared context & Deny & Provenance completeness. \\
Stale context digest & Deny & Current context-source and digest binding. \\
Raw context leak & Deny & Minimal-disclosure enforcement. \\
Payload outside intent & Deny & Authoritative intent-to-payload binding. \\
Resource overbreadth & Deny & Exact resource-set confinement. \\
Stale policy evidence & Deny & Current evidence-digest binding. \\
Invalid risk enumeration & Review & Bounded schema and fail-safe parsing. \\
Duplicate evidence type & Review & Evidence-type uniqueness and completeness. \\
\bottomrule
\end{tabular}
\caption{Registered scenario families in the strict reference profile.}
\end{table}

\FloatBarrier
\section{Reference Verification Algorithm}

\begin{lstlisting}[caption={Implementation-neutral EBTE decision procedure.}]
function decide(serializedExplanation, action, principal):
  require utf8ByteLength(serializedExplanation) <= 1048576
  explanation = parseJSON(serializedExplanation)
  parsed = validateStrictBoundedSchema(explanation)
  schemaRoute = classifySchemaDisposition(parsed)
  serverFacts = resolveCurrentScopedFacts(
    principal, action,
    ["intent", "tool", "route", "policy", "context"]
  )
  availability = validateAuthoritativeFields(
    action, serverFacts, perPredicateDependencies
  )
  facts = normalizeAvailableServerFacts(action, serverFacts)
  partial = normalizeParseableClaimSubfields(
    explanation,
    claimFieldNamesOnly=true,
    sliceBeforeFilterAndDeduplicate=profile.canonicalCaps
  )
  hard = compareSafelyResolvableHardDimensions(
    partial, facts, availability, [
    "intent_classes", "operation_match", "operation_inside_intent",
    "tool", "authorized_route", "payload_inside_intent",
    "resource", "effect_bounds", "destination",
    "policy_rule", "policy_decision", "context_digest",
    "evidence_freshness", "raw_context_disclosure"
  ])
  // Each applicable effect bound must cover payload and stay within intent.
  // Freshness compares every parseable required ref independently of
  // one-to-one completeness. Unavailable server operands are never matches.
  if schemaRoute.isDeny:
    ebte = DENY(
      schemaRoute.reasonCodes + hard.reasonCodes
        + availability.reviewReasonCodes,
      structuralPacket(
        schemaRoute, hard, availability, parsed.claimDiagnostics
      )
    )
  else if not parsed.schemaValid:
    if hard.notEmpty:
      ebte = DENY(
        hard.reasonCodes + schemaRoute.reasonCodes
          + availability.reviewReasonCodes,
        structuralPacket(
          hard, schemaRoute, availability, parsed.claimDiagnostics
        )
      )
    else:
      ebte = REVIEW(
        schemaRoute.reasonCodes + availability.reviewReasonCodes,
        structuralPacket(
          schemaRoute, availability, parsed.claimDiagnostics
        )
      )
  else:
    claims = normalizeExplanation(parsed)
    soft = compareSoftDimensions(claims, facts, [
      "risk", "policy_basis", "policy_decision",
      "effect_bound_claim_completeness",
      "destination_claim_completeness",
      "field_claim_completeness", "evidence_completeness"
    ])
    soft.addAll(
      schemaRoute.reviewReasonCodes
        + availability.reviewReasonCodes
    )
    tightening = oneWayTightening([
      claims.uncertainty
    ])
    advisory = recordWithoutAuthority([
      claims.instructionInfluence,
      claims.toolReasonCode
    ])
    if facts.tool.highRisk:
      soft.add("high_risk_review")
    if hard.notEmpty:
      ebte = DENY(
        hard.reasonCodes + soft.reasonCodes
          + tightening.reasonCodes,
        structuralPacket(hard, soft, tightening, advisory)
      )
    else if soft.notEmpty or tightening.requiresReview:
      ebte = REVIEW(
        soft.reasonCodes + tightening.reasonCodes,
        structuralPacket(soft, tightening, advisory)
      )
    else:
      ebte = ALLOW(
        [], structuralPacket("all_match", advisory)
      )

  final = mostRestrictive(
    serverFacts.baselineDecision,
    ebte,
    serverFacts.effectDecision
  )
  return final  // REVIEW can create a draft, never execute.
\end{lstlisting}

The pseudocode separates hard, soft, one-way-tightening, and audit-only
advisory inputs. In profile v0.1, high declared uncertainty can
tighten an otherwise matching result to review; \texttt{instructionInfluence}
and \texttt{toolReasonCode} are recorded but do not change the disposition.
They cannot justify allow. Implementations must resolve current server facts
before normalization, validate availability at field granularity, and must not
reuse model-supplied policy, risk, route, or provenance as authoritative
values. The exported reference verifier accepts an already parsed object, so
its caller must enforce the registered 1\,MiB UTF-8 limit before parsing; JSON
Schema \texttt{maxItems} is not a transport-memory bound. Post-parse validation
uses bounded array prefixes, a 128-error cap with a stable truncation
diagnostic, and Unicode code-point length semantics. A malformed authoritative
field makes its dependent predicate unknown without suppressing an
independently resolvable hard result. Profile v0.1
evidence references contain only intent, policy, and payload digests; route is
resolved as a server fact, and context is compared as a deduplicated
source-category/digest pair set. Any later approval invokes the procedure again
with current facts before effect execution.

\FloatBarrier
\section{Experimental Units and Denominators}

\begingroup
\small
\begin{longtable}{L{0.25\linewidth}L{0.19\linewidth}L{0.22\linewidth}L{0.23\linewidth}}
\caption{Realized experimental units and their descriptive scope.}
\label{tab:units}\\
\toprule
\textbf{Artifact} & \textbf{Unit} & \textbf{Realized denominator} & \textbf{Permitted interpretation} \\
\midrule
\endfirsthead
\toprule
\textbf{Artifact} & \textbf{Unit} & \textbf{Realized denominator} & \textbf{Permitted interpretation} \\
\midrule
\endhead
Deterministic contract & Task--configuration row & \ConformanceTasks{} tasks/configuration; \ConformanceRows{} total & Complete summary of authored profile fixtures. \\
Hard fixture & Task & \HardFixtures{} per configuration & Conformance on registered hard mutations. \\
Soft fixture & Task & \SoftFixtures{} per configuration & Conformance on registered review mutations. \\
Aligned path & Task & \AlignedFixtures{} per configuration & Compatibility with expected fixture routing. \\
Metamorphic suite & Predicate instance & \MetamorphicChecks{} total & Regression evidence for named transformations. \\
Structural packet & Full-profile outcome & \StructuralPacketChecks{} packets & Conformance of the bounded packet projection on authored outcomes. \\
Authoritative-deny control & Family--claim-shape check & \AuthoritativePolicyDenyChecks{} valid plus \AuthoritativePolicyMalformedDenyChecks{} malformed & A schema-review branch cannot downgrade a current policy denial. \\
Authoritative-review control & Family-level policy check & \AuthoritativePolicyReviewChecks{} total & A matching claim cannot downgrade current policy review to allow. \\
Mixed predicate controls & Family-level check & \SchemaSoftHardDominanceChecks{} schema-soft/hard; \RawSchemaHardDominanceChecks{} raw-schema/hard; \HardSoftPacketTruthChecks{} hard/soft & Priority and packet truth on registered compound cases. \\
Mixed-malformed controls & Family-level check or pair & \MixedMalformedAlignedSetChecks{} aligned; \MixedMalformedKnownHardChecks{} known-hard; \SelfDenialMalformedChecks{} self-denial; \TypedRawDisclosureChecks{} typed-raw; \MalformedExpectedEffectChecks{} review/deny pairs & Partial-safe comparison, non-fabrication, crash freedom, and hard-dominance regression evidence. \\
Evidence-state controls & Family-level check & \StaleIncompleteEvidenceChecks{} stale/incomplete; \UnavailableAuthoritativeEvidenceChecks{} unavailable-current & Independence of freshness, completeness, and current-authority availability. \\
Effect three-way controls & Applicable bound or family & \EffectUpperBoundChecks{} upper; \EffectSafeSlackChecks{} safe slack; \InapplicableEffectClaimChecks{} inapplicable & Claim covers payload, remains inside intent, and omits inapplicable dimensions. \\
Context-state controls & Family or untrusted family & \ContextExactnessHardChecks{} exactness; \UnavailableAuthoritativeContextChecks{} unavailable-current; \DuplicateCurrentContextChecks{} duplicate-current & Exact normalized-pair semantics and current-authority availability. \\
Authoritative-availability controls & Family-level check or pair & \AuthoritativeFactAvailabilityChecks{} unavailable-only; \OrthogonalAuthoritativeMalformedHardChecks{} unavailable plus known-hard; \EmptyAuthoritativeObjectChecks{} empty-object pairs; \MissingPayloadEnvelopeSiblingChecks{} payload; \MissingIntentEnvelopeSiblingChecks{} intent; \PresentOptionalOrthogonalHardChecks{} present optional; \ResourceOperandAvailabilityChecks{} resource operands; \InvalidAuthoritativeNumericDomainChecks{} numeric domain; \AuthoritativeEnumAvailabilityChecks{} taxonomy & Affected predicates become unknown without masking independent hard results or fabricating inapplicability from an unavailable envelope. \\
Policy/evidence operand controls & Family-level check & \SelfDenialUnavailableAuthorityChecks{} self-denial; \PolicyBasisOperandAvailabilityChecks{} split scope/rule; \ScopeFreePolicyBasisChecks{} empty/empty scopes; \EmptyScopeMissingChecks{} empty/nonempty scopes; \EvidenceExtraKeyStaleChecks{} extra-key/stale & Per-operand availability and fixed evidence-type projection. \\
Canonical replay controls & Family, pair, or boundary instance & \CanonicalScopeClaimChecks{} scopes; \WideAllowedFieldReplayChecks{} wide fields; \OverCapAuthoritativeSetChecks{} authority sets; \TailBoundedClaimChecks{} bounded claims; \TailAllowedFieldClaimChecks{} fields; \TailContextDependencyChecks{} context; \CanonicalInvalidScalarChecks{} scalars; \InvalidAuthoritativeContextSourceChecks{} source & Packet operands and semantic comparison share cap-first canonicalization. \\
Packet/validator boundary controls & Pair or boundary instance & \ClaimAliasIsolationChecks{} aliases; \PacketDiagnosticCollisionChecks{} diagnostics; \PacketOutcomeOrderChecks{} outcome order; \UnicodeLengthChecks{} Unicode; \BoundedValidationResourceChecks{} bounded validation & No alias-domain confusion, packet collisions, order dependence, UTF-16 length drift, or unbounded post-parse validation errors in the named controls. \\
Claim-omission controls & Applicable claim component & \EffectClaimOmissionChecks{} bound; \DestinationClaimOmissionChecks{} destination; \FieldClaimOmissionChecks{} field set & Review plus unknown, never a false packet match. \\
Three-way decision join & Decision triple & \JoinCompositionChecks{} total & Exhaustive finite check of baseline--EBTE--effect composition. \\
Draft-only wrapper & Task--configuration row & \RuntimeTasks{} tasks/configuration; \RuntimeRows{} total & Local in-process draft composition. \\
Runtime hard path & Task & \RuntimeHardFixtures{} per configuration & Whether a designated hard case reaches draft. \\
Runtime soft path & Task & \RuntimeSoftFixtures{} per configuration & Whether a designated soft case creates a draft. \\
Runtime aligned path & Task & \RuntimeAlignedFixtures{} per configuration & Whether an aligned case creates a draft. \\
Historical model initial generation & Attempt & \HostedInitialRows{}; \HostedModels{} models $\times$ \HostedTasks{} tasks $\times$ \HostedRepeats{} seeds & Frozen 2026-07-12 constrained model-interface record. \\
Historical model ordinary repair & Attempt & \HostedRepairRows{}; paired with initial generation & Frozen descriptive paired feedback transitions. \\
Historical model seeded repair & Attempt & \HostedSeededRows{}; \HostedModels{} models $\times$ \HostedRepairableTasks{} defects $\times$ \HostedRepeats{} seeds & Frozen repair record for named authored explanation defects. \\
Current model-output revalidation & Preserved attempt record & \HostedCurrentAttempts{} attempts; \HostedCurrentResponses{} responses; \HostedCurrentSavedClaims{} saved parsed claims & Zero-call post-hoc current-pipeline judgment over minimized historical fields; not a hosted rerun or historical prompt/code reconstruction. \\
AgentDojo semantic adaptation & Proposal--configuration row & \ExternalProposals{} proposals $\times$ \ExternalConfigurations{}; \ExternalRows{} total & Deterministic task--proposal binding on minimized structure. \\
AgentDojo cross-task attack & Proposal & \ExternalAttacks{} per configuration & Review-to-deny diagnostic refinement; all already non-allow. \\
AgentDojo legitimate paths & Proposal & \ExternalBenign{} benign and \ExternalDirect{} direct goals/configuration & Compatibility with upstream-derived roles. \\
\bottomrule
\end{longtable}
\endgroup
\FloatBarrier

The task-condition row is the primary descriptive unit. Multiple variants derived from the same base family are dependent by construction. Likewise, runtime measurements share the same local machine and in-memory implementation. The paper therefore does not attach independent-sample confidence intervals or significance tests to these rates.

\ifdefined\EBTETOPS
  \clearpage
\else
  \FloatBarrier
\fi
\section{Privacy and Evidence-Abuse Matrix}

\begin{table}[h]
\centering
\small
\begin{tabular}{L{0.26\linewidth}L{0.30\linewidth}L{0.34\linewidth}}
\toprule
\textbf{Abuse case} & \textbf{Required handling} & \textbf{Residual risk} \\
\midrule
Raw untrusted content in explanation & Reject or sanitize; store category, digest, and privacy reason only. & Encoded or obfuscated content needs additional detection. \\
Customer or tenant identifier & Replace with bounded opaque identifier under audit access control. & Linkability remains possible for privileged auditors. \\
Credential-like material & Reject, alert, and prevent durable propagation. & Detection cannot guarantee coverage of novel formats. \\
Private prompt or model trace & Exclude from the contract and ordinary audit export. & Separately retained debugging systems require their own controls. \\
Fabricated evidence digest & Resolve against current scoped records; unknown digest is non-match. & Compromised evidence stores remain outside the guarantee. \\
Oversized explanation & Enforce per-field and total schema limits before parsing. & Resource-exhaustion controls remain deployment-specific. \\
Cross-application reference & Scope every evidence lookup to authenticated app, key, actor, and session. & Incorrect identity binding defeats isolation. \\
Stale policy or context reference & Revalidate current version at action time. & Out-of-band effects cannot be recovered by revalidation. \\
\bottomrule
\end{tabular}
\caption{Privacy and evidence-abuse requirements. The current artifact exercises explicit raw-context fields, digest shape, evidence freshness, and evidence-type uniqueness.}
\label{tab:privacy}
\end{table}

\FloatBarrier
\section{Artifact Inventory and Reproduction}

The authoritative and derived artifacts are separated as follows:

\begin{itemize}
  \item \texttt{task\_catalog.mjs}: synthetic base tasks and mutations;
  \item \texttt{explanation\_verifier.mjs}: reusable decision logic;
  \item \texttt{explanation\_contract\_eval.mjs}: deterministic condition runner and metamorphic suite;
  \item \texttt{results\_explanation\_contract.json}: authoritative deterministic output;
  \item \texttt{profile/ebte-v0.1-schema.json}: bounded public JSON Schema;
  \item \texttt{profile/ebte-v0.1-profile.json}: machine-readable decisions, reasons, privacy boundary, and expected counts;
  \item \texttt{conformance\_profile\_eval.mjs}: schema/profile/reference cross-check;
  \item \texttt{runtime\_explanation\_eval.ts}: local OpenPort wrapper-path runner;
  \item \texttt{results\_runtime\_explanation.json}: authoritative runtime summary and rows;
  \item \texttt{model\_pilot\_tasks.jsonl}: synthetic aligned, repairable, and nonrepairable model tasks;
  \item \texttt{model\_explanation\_pilot.mjs}: NVIDIA-only generation and repair runner;
  \item per-model JSON/JSONL outputs: authoritative minimized model results and traces;
  \item \texttt{aggregate\_model\_explanation\_eval.mjs}: four-model evidence aggregator;
  \item \texttt{results\_model\_explanation\_multimodel.json}: authoritative cross-model summary;
  \item \texttt{agentdojo\_semantic\_adaptation\_eval.mjs}: pinned minimized external-task semantic adapter;
  \item \texttt{agentdojo\_minimized\_proposals.json}: self-contained \ExternalProposals-record public-structure snapshot;
  \item \texttt{results\_agentdojo\_semantic\_adaptation.json}: authoritative \ExternalRows-row adaptation result;
  \item \texttt{verify\_artifact\_manifest.mjs}: evidence, privacy, count, table-text, byte-length, and SHA-256 verifier; and
  \item Markdown result files: derived human-readable tables.
\end{itemize}

The deterministic artifact runs with Node.js and no external services. The runtime wrapper requires an explicit path to a compatible OpenPort reference-runtime checkout and records the Git revision, cleanliness state, and source fingerprint. The manifest verifier rejects changed bytes, mismatched result denominators, unregistered evidence totals, credential-like material, and propagation of the raw negative-fixture marker into generated outputs.

\FloatBarrier
\section{Operator Review Evidence}

A future EBTE review interface should display only decision-relevant evidence:

\begin{enumerate}
  \item proposed tool, operation, target resources, quantitative bounds, and destination;
  \item current intent classes and bounded purpose summary;
  \item current policy and route decision identifiers;
  \item risk, uncertainty, and context-dependency categories;
  \item matched, soft, unknown, and hard predicate results;
  \item whether evidence is current, stale, missing, or privacy-sanitized; and
  \item the available reviewer actions and their expected effects.
\end{enumerate}

The interface must not imply that all-match proves semantic correctness. It should distinguish gateway-verified facts from model-declared summaries and expose raw sources only through separately authorized, logged access.

\begingroup
\scriptsize
\raggedright
\sloppy
\ifEBTEIEEEBuild
  \bibliographystyle{IEEEtran}
\else
  \bibliographystyle{ACM-Reference-Format}
\fi
\bibliography{references}

@article{saltzer1975,
  author  = {Saltzer, Jerome H. and Schroeder, Michael D.},
  title   = {The Protection of Information in Computer Systems},
  journal = {Proceedings of the IEEE},
  year    = {1975},
  volume  = {63},
  number  = {9},
  pages   = {1278--1308},
  doi     = {10.1109/PROC.1975.9939}
}

@techreport{nist2014abac,
  author      = {Hu, Vincent C. and Ferraiolo, David and Kuhn, Rick and Schnitzer, Adam and Sandlin, Kenneth and Miller, Robert and Scarfone, Karen},
  title       = {Guide to Attribute Based Access Control ({ABAC}) Definition and Considerations},
  institution = {National Institute of Standards and Technology},
  type        = {NIST Special Publication},
  number      = {800-162},
  year        = {2014},
  doi         = {10.6028/NIST.SP.800-162},
  url         = {https://doi.org/10.6028/NIST.SP.800-162}
}

@techreport{nist2024genai,
  author      = {{National Institute of Standards and Technology}},
  title       = {Artificial Intelligence Risk Management Framework: Generative Artificial Intelligence Profile},
  institution = {National Institute of Standards and Technology},
  type        = {NIST AI},
  number      = {600-1},
  year        = {2024},
  doi         = {10.6028/NIST.AI.600-1},
  url         = {https://doi.org/10.6028/NIST.AI.600-1}
}

@misc{owasp2025promptinjection,
  author       = {{OWASP Foundation}},
  title        = {{LLM01}:2025 Prompt Injection},
  year         = {2025},
  howpublished = {\url{https://genai.owasp.org/llmrisk/llm01-prompt-injection/}},
  note         = {Accessed 2026-07-12}
}

@misc{mcp2025tools,
  author       = {{Model Context Protocol}},
  title        = {Tools -- Model Context Protocol Specification, 2025-11-25},
  year         = {2025},
  howpublished = {\url{https://modelcontextprotocol.io/specification/2025-11-25/server/tools}},
  note         = {Version 2025-11-25; accessed 2026-07-24}
}

@misc{mcp2025authorization,
  author       = {{Model Context Protocol}},
  title        = {Authorization -- Model Context Protocol Specification, 2025-11-25},
  year         = {2025},
  howpublished = {\url{https://modelcontextprotocol.io/specification/2025-11-25/basic/authorization}},
  note         = {Version 2025-11-25; accessed 2026-07-24}
}

@inproceedings{tian2017smartauth,
  author    = {Tian, Yuan and Zhang, Nan and Lin, Yueh-Hsun and Wang, Xiaofeng and Ur, Blase and Guo, Xianzheng and Tague, Patrick},
  title     = {{SmartAuth}: User-Centered Authorization for the Internet of Things},
  booktitle = {26th USENIX Security Symposium (USENIX Security 17)},
  year      = {2017},
  pages     = {361--378},
  publisher = {USENIX Association},
  address   = {Vancouver, BC},
  url       = {https://www.usenix.org/conference/usenixsecurity17/technical-sessions/presentation/tian}
}

@inproceedings{petracca2017aware,
  author    = {Petracca, Giuseppe and Reineh, Ahmad-Atamli and Sun, Yuqiong and Grossklags, Jens and Jaeger, Trent},
  title     = {{AWare}: Preventing Abuse of Privacy-Sensitive Sensors via Operation Bindings},
  booktitle = {26th USENIX Security Symposium (USENIX Security 17)},
  year      = {2017},
  pages     = {379--396},
  publisher = {USENIX Association},
  address   = {Vancouver, BC},
  url       = {https://www.usenix.org/conference/usenixsecurity17/technical-sessions/presentation/petracca}
}

@inproceedings{ribeiro2016lime,
  author    = {Ribeiro, Marco Tulio and Singh, Sameer and Guestrin, Carlos},
  title     = {Why Should I Trust You?: Explaining the Predictions of Any Classifier},
  booktitle = {Proceedings of the 22nd ACM SIGKDD International Conference on Knowledge Discovery and Data Mining},
  year      = {2016},
  pages     = {1135--1144},
  publisher = {Association for Computing Machinery},
  address   = {New York, NY, USA},
  doi       = {10.1145/2939672.2939778}
}

@article{miller2019explanation,
  author  = {Miller, Tim},
  title   = {Explanation in Artificial Intelligence: Insights from the Social Sciences},
  journal = {Artificial Intelligence},
  year    = {2019},
  volume  = {267},
  pages   = {1--38},
  doi     = {10.1016/j.artint.2018.07.007}
}

@article{lipton2018mythos,
  author  = {Lipton, Zachary C.},
  title   = {The Mythos of Model Interpretability},
  journal = {Queue},
  year    = {2018},
  volume  = {16},
  number  = {3},
  pages   = {31--57},
  doi     = {10.1145/3236386.3241340}
}

@inproceedings{amershi2019guidelines,
  author    = {Amershi, Saleema and Weld, Dan and Vorvoreanu, Mihaela and Fourney, Adam and Nushi, Besmira and Collisson, Penny and Suh, Jina and Iqbal, Shamsi and Bennett, Paul N. and Inkpen, Kori and Teevan, Jaime and Kikin-Gil, Ruth and Horvitz, Eric},
  title     = {Guidelines for {Human-AI} Interaction},
  booktitle = {Proceedings of the 2019 CHI Conference on Human Factors in Computing Systems},
  year      = {2019},
  pages     = {1--13},
  publisher = {Association for Computing Machinery},
  address   = {New York, NY, USA},
  doi       = {10.1145/3290605.3300233}
}

@inproceedings{kaur2020interpreting,
  author    = {Kaur, Harmanpreet and Nori, Harsha and Jenkins, Samuel and Caruana, Rich and Wallach, Hanna and Wortman Vaughan, Jennifer},
  title     = {Interpreting Interpretability: Understanding Data Scientists' Use of Interpretability Tools for Machine Learning},
  booktitle = {Proceedings of the 2020 CHI Conference on Human Factors in Computing Systems},
  year      = {2020},
  pages     = {1--14},
  publisher = {Association for Computing Machinery},
  address   = {New York, NY, USA},
  doi       = {10.1145/3313831.3376219}
}

@inproceedings{bucinca2021trust,
  author    = {Bu{\c{c}}inca, Zana and Malaya, Maja Barbara and Gajos, Krzysztof Z.},
  title     = {To Trust or to Think: Cognitive Forcing Functions Can Reduce Overreliance on {AI} in {AI-Assisted} Decision-Making},
  booktitle = {Proceedings of the 2021 CHI Conference on Human Factors in Computing Systems},
  year      = {2021},
  pages     = {1--21},
  publisher = {Association for Computing Machinery},
  address   = {New York, NY, USA},
  doi       = {10.1145/3411764.3445659}
}

@inproceedings{poursabzi2018manipulating,
  author    = {Poursabzi-Sangdeh, Forough and Goldstein, Daniel G. and Hofman, Jake M. and Wortman Vaughan, Jennifer and Wallach, Hanna},
  title     = {Manipulating and Measuring Model Interpretability},
  booktitle = {Proceedings of the 2021 CHI Conference on Human Factors in Computing Systems},
  year      = {2021},
  pages     = {1--52},
  articleno = {237},
  numpages  = {52},
  publisher = {Association for Computing Machinery},
  address   = {New York, NY, USA},
  doi       = {10.1145/3411764.3445315},
  url       = {https://doi.org/10.1145/3411764.3445315}
}

@inproceedings{vigano2020xsec,
  author    = {Vigan{\`o}, Luca and Magazzeni, Daniele},
  title     = {Explainable Security},
  booktitle = {2020 IEEE European Symposium on Security and Privacy Workshops (EuroS\&PW)},
  year      = {2020},
  pages     = {293--300},
  publisher = {IEEE},
  address   = {Genoa, Italy},
  doi       = {10.1109/EuroSPW51379.2020.00045},
  url       = {https://doi.org/10.1109/EuroSPW51379.2020.00045}
}

@inproceedings{mehri2025xac,
  author    = {{Hasel Mehri}, Gelareh and Morisset, Charles and Zannone, Nicola},
  title     = {Towards Explainable Access Control [{BlueSky Paper}]},
  booktitle = {Proceedings of the 30th ACM Symposium on Access Control Models and Technologies},
  year      = {2025},
  pages     = {117--126},
  publisher = {Association for Computing Machinery},
  address   = {New York, NY, USA},
  doi       = {10.1145/3734436.3734439},
  url       = {https://doi.org/10.1145/3734436.3734439}
}

@inproceedings{sunshine2009warnings,
  author    = {Sunshine, Joshua and Egelman, Serge and Almuhimedi, Hazim and Atri, Neha and Cranor, Lorrie Faith},
  title     = {Crying Wolf: An Empirical Study of {SSL} Warning Effectiveness},
  booktitle = {18th USENIX Security Symposium},
  year      = {2009},
  pages     = {399--416},
  publisher = {USENIX Association},
  address   = {Montreal, Quebec},
  url       = {https://www.usenix.org/conference/18th-usenix-security-symposium/crying-wolf-empirical-study-ssl-warning-effectiveness}
}

@misc{rundgren2020jcs,
  author       = {Rundgren, Anders and Jordan, Bret and Erdtman, Samuel},
  title        = {{JSON} Canonicalization Scheme ({JCS})},
  year         = {2020},
  howpublished = {RFC 8785},
  doi          = {10.17487/RFC8785},
  url          = {https://www.rfc-editor.org/rfc/rfc8785.html}
}

@inproceedings{yao2023react,
  author    = {Yao, Shunyu and Zhao, Jeffrey and Yu, Dian and Du, Nan and Shafran, Izhak and Narasimhan, Karthik and Cao, Yuan},
  title     = {{ReAct}: Synergizing Reasoning and Acting in Language Models},
  booktitle = {International Conference on Learning Representations},
  year      = {2023},
  numpages  = {33},
  publisher = {OpenReview.net},
  address   = {Kigali, Rwanda},
  url       = {https://openreview.net/forum?id=WE_vluYUL-X}
}

@inproceedings{turpin2023unfaithful,
  author    = {Turpin, Miles and Michael, Julian and Perez, Ethan and Bowman, Samuel R.},
  title     = {Language Models Don't Always Say What They Think: Unfaithful Explanations in Chain-of-Thought Prompting},
  booktitle = {Advances in Neural Information Processing Systems},
  year      = {2023},
  volume    = {36},
  pages     = {74952--74965},
  publisher = {Curran Associates, Inc.},
  address   = {Red Hook, NY, USA},
  doi       = {10.52202/075280-3275},
  url       = {https://proceedings.neurips.cc/paper_files/paper/2023/hash/ed3fea9033a80fea1376299fa7863f4a-Abstract-Conference.html}
}

@misc{lanham2023faithfulness,
  author        = {Lanham, Tamera and Chen, Anna and Radhakrishnan, Ansh and Steiner, Benoit and Denison, Carson and Hernandez, Danny and Li, Dustin and Durmus, Esin and Hubinger, Evan and Kernion, Jackson and Luko{\v{s}}i{\=u}t{\.e}, Kamil{\.e} and Nguyen, Karina and Cheng, Newton and Joseph, Nicholas and Schiefer, Nicholas and Rausch, Oliver and Larson, Robin and McCandlish, Sam and Kundu, Sandipan and Kadavath, Saurav and Yang, Shannon and Henighan, Thomas and Maxwell, Timothy and Telleen-Lawton, Timothy and Hume, Tristan and Hatfield-Dodds, Zac and Kaplan, Jared and Brauner, Jan and Bowman, Samuel R. and Perez, Ethan},
  title         = {Measuring Faithfulness in Chain-of-Thought Reasoning},
  year          = {2023},
  eprint        = {2307.13702},
  archivePrefix = {arXiv},
  primaryClass  = {cs.AI},
  doi           = {10.48550/arXiv.2307.13702},
  url           = {https://arxiv.org/abs/2307.13702},
  note          = {Preprint}
}

@misc{yee2024dissociation,
  author        = {Yee, Evelyn and Li, Alice and Tang, Chenyu and Jung, Yeon Ho and Paturi, Ramamohan and Bergen, Leon},
  title         = {Dissociation of Faithful and Unfaithful Reasoning in {LLMs}},
  year          = {2024},
  eprint        = {2405.15092},
  archivePrefix = {arXiv},
  primaryClass  = {cs.AI},
  doi           = {10.48550/arXiv.2405.15092},
  url           = {https://arxiv.org/abs/2405.15092},
  note          = {Preprint}
}

@inproceedings{zhan2024injecagent,
  author    = {Zhan, Qiusi and Liang, Zhixiang and Ying, Zifan and Kang, Daniel},
  title     = {InjecAgent: Benchmarking Indirect Prompt Injections in Tool-Integrated Large Language Model Agents},
  booktitle = {Findings of the Association for Computational Linguistics: ACL 2024},
  year      = {2024},
  pages     = {10471--10506},
  publisher = {Association for Computational Linguistics},
  address   = {Bangkok, Thailand},
  doi       = {10.18653/v1/2024.findings-acl.624},
  url       = {https://aclanthology.org/2024.findings-acl.624/}
}

@inproceedings{greshake2023indirect,
  author    = {Greshake, Kai and Abdelnabi, Sahar and Mishra, Shailesh and Endres, Christoph and Holz, Thorsten and Fritz, Mario},
  title     = {Not What You've Signed Up For: Compromising Real-World {LLM-Integrated} Applications with Indirect Prompt Injection},
  booktitle = {Proceedings of the 16th ACM Workshop on Artificial Intelligence and Security},
  year      = {2023},
  pages     = {79--90},
  publisher = {Association for Computing Machinery},
  address   = {New York, NY, USA},
  doi       = {10.1145/3605764.3623985}
}

@inproceedings{ruan2024toolemu,
  author    = {Ruan, Yangjun and Dong, Honghua and Wang, Andrew and Pitis, Silviu and Zhou, Yongchao and Ba, Jimmy and Dubois, Yann and Maddison, Chris J. and Hashimoto, Tatsunori},
  title     = {Identifying the Risks of {LM} Agents with an {LM-Emulated} Sandbox},
  booktitle = {The Twelfth International Conference on Learning Representations},
  year      = {2024},
  numpages  = {70},
  publisher = {OpenReview.net},
  address   = {Vienna, Austria},
  url       = {https://openreview.net/forum?id=GEcwtMk1uA}
}

@inproceedings{debenedetti2024agentdojo,
  author    = {Debenedetti, Edoardo and Zhang, Jie and Balunovic, Mislav and Beurer-Kellner, Luca and Fischer, Marc and Tram{\`e}r, Florian},
  title     = {{AgentDojo}: A Dynamic Environment to Evaluate Prompt Injection Attacks and Defenses for {LLM} Agents},
  booktitle = {Advances in Neural Information Processing Systems 37},
  year      = {2024},
  pages     = {82895--82920},
  publisher = {Curran Associates, Inc.},
  address   = {Red Hook, NY, USA},
  doi       = {10.52202/079017-2636},
  url       = {https://proceedings.neurips.cc/paper_files/paper/2024/hash/97091a5177d8dc64b1da8bf3e1f6fb54-Abstract-Datasets_and_Benchmarks_Track.html}
}

@inproceedings{chen2024struq,
  author    = {Chen, Sizhe and Piet, Julien and Sitawarin, Chawin and Wagner, David},
  title     = {{StruQ}: Defending Against Prompt Injection with Structured Queries},
  booktitle = {34th USENIX Security Symposium (USENIX Security 25)},
  year      = {2025},
  pages     = {2383--2400},
  publisher = {USENIX Association},
  address   = {Seattle, WA},
  url       = {https://www.usenix.org/conference/usenixsecurity25/presentation/chen-sizhe}
}

@inproceedings{debenedetti2026camel,
  author        = {Debenedetti, Edoardo and Shumailov, Ilia and Fan, Tianqi and Hayes, Jamie and Carlini, Nicholas and Fabian, Daniel and Kern, Christoph and Shi, Chongyang and Terzis, Andreas and Tram{\`e}r, Florian},
  title         = {Defeating Prompt Injections by Design},
  booktitle     = {IEEE Conference on Secure and Trustworthy Machine Learning (SaTML)},
  year          = {2026},
  eprint        = {2503.18813},
  archivePrefix = {arXiv},
  primaryClass  = {cs.CR},
  publisher     = {IEEE},
  address       = {Munich, Germany},
  url           = {https://satml.org/2026/program/},
  note          = {Also available as arXiv:2503.18813}
}

@misc{costa2025fides,
  author        = {Costa, Manuel and K{\"o}pf, Boris and Kolluri, Aashish and Paverd, Andrew and Russinovich, Mark and Salem, Ahmed and Tople, Shruti and Wutschitz, Lukas and Zanella-B{\'e}guelin, Santiago},
  title         = {Securing {AI} Agents with Information-Flow Control},
  year          = {2025},
  eprint        = {2505.23643},
  archivePrefix = {arXiv},
  primaryClass  = {cs.CR},
  doi           = {10.48550/arXiv.2505.23643},
  url           = {https://arxiv.org/abs/2505.23643},
  note          = {Preprint}
}

@inproceedings{odersky2026tracked,
  author    = {Odersky, Martin and Zhao, Yaoyu and Xu, Yichen and Bra{\v{c}}evac, Oliver and Pham, Cao Nguyen},
  title     = {Securing Agents With Tracked Capabilities},
  booktitle = {Proceedings of the ACM Conference on AI and Agentic Systems},
  year      = {2026},
  pages     = {812--838},
  publisher = {Association for Computing Machinery},
  address   = {New York, NY, USA},
  doi       = {10.1145/3786335.3813127},
  url       = {https://doi.org/10.1145/3786335.3813127}
}

@inproceedings{wang2026agentspec,
  author    = {Wang, Haoyu and Poskitt, Christopher M. and Sun, Jun},
  title     = {{AgentSpec}: Customizable Runtime Enforcement for Safe and Reliable {LLM} Agents},
  booktitle = {2026 IEEE/ACM 48th International Conference on Software Engineering (ICSE)},
  year      = {2026},
  numpages  = {12},
  publisher = {Association for Computing Machinery},
  address   = {New York, NY, USA},
  doi       = {10.1145/3744916.3764546},
  url       = {https://cposkitt.github.io/files/publications/agentspec_llm_enforcement_icse26.pdf},
  note      = {12 pages}
}

@inproceedings{doshi2026verifiablysafe,
  author    = {Doshi, Aarya and Hong, Yining and Xu, Congying and Kang, Eunsuk and Kapravelos, Alexandros and K{\"a}stner, Christian},
  title     = {Towards Verifiably Safe Tool Use for {LLM} Agents},
  booktitle = {Proceedings of the IEEE/ACM 48th International Conference on Software Engineering: New Ideas and Emerging Results (ICSE-NIER)},
  year      = {2026},
  pages     = {201--205},
  publisher = {Association for Computing Machinery},
  address   = {New York, NY, USA},
  doi       = {10.1145/3786582.3786839},
  url       = {https://doi.org/10.1145/3786582.3786839}
}

@inproceedings{shi2025toolhijacker,
  author    = {Shi, Jiawen and Yuan, Zenghui and Tie, Guiyao and Zhou, Pan and Gong, Neil Zhenqiang and Sun, Lichao},
  title     = {Prompt Injection Attack to Tool Selection in {LLM} Agents},
  booktitle = {Network and Distributed System Security Symposium (NDSS 2026)},
  year      = {2026},
  numpages  = {18},
  publisher = {Internet Society},
  address   = {San Diego, CA, USA},
  doi       = {10.14722/ndss.2026.230675},
  url       = {https://www.ndss-symposium.org/ndss-paper/prompt-injection-attack-to-tool-selection-in-llm-agents/}
}

@article{wang2025mcptox,
  author  = {Wang, Zhiqiang and Gao, Yichao and Wang, Yanting and Liu, Suyuan and Sun, Haifeng and Cheng, Haoran and Shi, Guanquan and Du, Haohua and Li, Xiangyang},
  title   = {{MCPTox}: A Benchmark for Tool Poisoning on Real-World {MCP} Servers},
  journal = {Proceedings of the AAAI Conference on Artificial Intelligence},
  year    = {2026},
  volume  = {40},
  number  = {42},
  pages   = {35811--35819},
  doi     = {10.1609/aaai.v40i42.40895},
  url     = {https://ojs.aaai.org/index.php/AAAI/article/view/40895}
}

@inproceedings{he2026attriguard,
  author    = {He, Yu and Zhu, Haozhe and Li, Yiming and Shao, Shuo and Yao, Hongwei and Liu, Zhihao and Qin, Zhan},
  title     = {{AttriGuard}: Defeating Indirect Prompt Injection in {LLM} Agents via Causal Attribution of Tool Invocations},
  booktitle = {35th USENIX Security Symposium (USENIX Security 26)},
  year      = {2026},
  publisher = {USENIX Association},
  address   = {Baltimore, MD},
  url       = {https://www.usenix.org/conference/usenixsecurity26/presentation/he-yu}
}

@article{singh2019decision,
  author  = {Singh, Jatinder and Cobbe, Jennifer and Norval, Chris},
  title   = {Decision Provenance: Harnessing Data Flow for Accountable Systems},
  journal = {IEEE Access},
  year    = {2019},
  volume  = {7},
  pages   = {6562--6574},
  doi     = {10.1109/ACCESS.2018.2887201},
  url     = {https://doi.org/10.1109/ACCESS.2018.2887201}
}

@inproceedings{capobianco2017accessprov,
  author    = {Capobianco, Frank and Skalka, Christian and Jaeger, Trent},
  title     = {{ACCESSPROV}: Tracking the Provenance of Access Control Decisions},
  booktitle = {9th USENIX Workshop on the Theory and Practice of Provenance (TaPP 2017)},
  year      = {2017},
  numpages  = {8},
  publisher = {USENIX Association},
  address   = {Seattle, WA},
  url       = {https://www.usenix.org/conference/tapp17/workshop-program/presentation/capobianco}
}

@misc{nian2026auditable,
  author        = {Nian, Yi and Yuan, Aojie and Zhang, Haiyue and Li, Jiate and Zhao, Yue},
  title         = {Auditable Agents},
  year          = {2026},
  eprint        = {2604.05485},
  archivePrefix = {arXiv},
  primaryClass  = {cs.AI},
  doi           = {10.48550/arXiv.2604.05485},
  url           = {https://arxiv.org/abs/2604.05485},
  note          = {Preprint}
}

@misc{zhang2026agentaudit,
  author        = {Zhang, Haiyue and Nian, Yi and Zhao, Yue},
  title         = {Agent Audit: A Security Analysis System for {LLM} Agent Applications},
  year          = {2026},
  eprint        = {2603.22853},
  archivePrefix = {arXiv},
  primaryClass  = {cs.CR},
  doi           = {10.48550/arXiv.2603.22853},
  url           = {https://arxiv.org/abs/2603.22853},
  note          = {Preprint}
}

@misc{zhu2026igac,
  author        = {Zhu, Genliang and Wang, Chu},
  title         = {Intent-Governed Tool Authorization for {AI} Agents},
  year          = {2026},
  eprint        = {2606.22916},
  archivePrefix = {arXiv},
  primaryClass  = {cs.AI},
  doi           = {10.48550/arXiv.2606.22916},
  url           = {https://arxiv.org/abs/2606.22916},
  note          = {Preprint}
}
\endgroup

\end{document}